\newtheorem{theorem}{Theorem}[section]
\newtheorem{corollary}{Corollary}[theorem]
\newtheorem{lemma}[theorem]{Lemma}
\newtheorem{definition}{Definition}[section]
\newcommand{\indep}{\perp \!\!\! \perp}
\newcommand{\aseq}{\hspace{0.1cm} \simeq \hspace{0.1cm}}
\def\eqref#1{equation~\ref{#1}}
\def\1{\bm{1}}
\DeclareMathAlphabet{\mathsfit}{\encodingdefault}{\sfdefault}{m}{sl}
\SetMathAlphabet{\mathsfit}{bold}{\encodingdefault}{\sfdefault}{bx}{n}
\begin{document}

\title{\textbf{Infinitely wide limits for deep Stable neural networks: sub-linear, linear and super-linear activation functions}}

\author{
  Alberto Bordino\\
 University of Warwick\\
  \texttt{alberto.bordino@warwick.ac.uk}
  \and
  Stefano Favaro\\
  University of Torino and Collegio Carlo Alberto\\
  \texttt{stefano.favaro@unito.it}
  \and
  Sandra Fortini\\
  Bocconi University\\
  \texttt{sandra.fortini@unibocconi.it}
}

\maketitle

\begin{abstract}
There is a growing literature on the study of large-width properties of deep Gaussian neural networks (NNs), i.e. deep NNs with Gaussian-distributed parameters or weights, and Gaussian stochastic processes. Motivated by some empirical and theoretical studies showing the potential of replacing Gaussian distributions with Stable distributions, namely distributions with heavy tails, in this paper we investigate large-width properties of deep Stable NNs, i.e. deep NNs with Stable-distributed parameters. For sub-linear activation functions, a recent work has characterized the infinitely wide limit of a suitable rescaled deep Stable NN in terms of a Stable stochastic process, both under the assumption of a  ``joint growth" and under the assumption of a ``sequential growth" of the width over the NN's layers. Here, assuming a ``sequential growth" of the width, we extend such a characterization to a general class of activation functions, which includes sub-linear, asymptotically linear and super-linear functions. As a novelty with respect to previous works, our results rely on the use of  a generalized central limit theorem for heavy tails distributions, which allows for an interesting unified treatment of infinitely wide limits for deep Stable NNs. Our study shows that the scaling of Stable NNs and the stability of their infinitely wide limits may depend on the choice of the activation function, bringing out a critical difference with respect to the Gaussian setting.
\end{abstract}

\section{Introduction}
Deep (feed-forward) neural networks (NNs) play a critical role in many domains of practical interest, and nowadays they are the subject of numerous studies. Of special interest is the study of prior distributions over the NN's parameters or weights, namely random initializations of NNs. In such a context, there is a growing interest on large-width properties of deep NNs with Gaussian-distributed parameters, with emphasis on the interplay between infinitely wide limits of such NNs and Gaussian stochastic processes. \citet{Neal96} characterized the infinitely wide limit of a shallow Gaussian NN. In particular, let: i) $\boldsymbol{x} \in \mathbb{R}^d$ be the input of the NN; ii) $\tau:\mathbb{R}\rightarrow\mathbb{R}$ be an activation function; iii) $\theta=\{w_i^{(0)},w,b_i^{(0)},b\}_{i \geq 1}$ be the collection of NN's parameters such that $w_{i,j}^{(0)}\overset{d}{=} w_{j}\overset{iid}{\sim}N(0,\sigma^{2}_{w})$ and  $b^{(0)}_i \overset{d}{=}b  \overset{iid}{\sim} N(0,\sigma^{2}_{b})$ for  $\sigma^{2}_{w},\sigma^{2}_{b}>0$, with $N(\mu,\sigma^{2})$ being the Gaussian distribution with mean $\mu$ and variance $\sigma^{2}$. Then, consider a rescaled shallow Gaussian NN defined as
\begin{equation}\label{defi.NN:gaussian}
f_{\boldsymbol{x}}(n)[\tau,n^{-1/2}] =  b+\frac{1}{n^{1/2}}\sum_{j=1}^{n}w_j \tau (\langle w^{(0)}_j,\boldsymbol{x} \rangle_{\mathbb{R}^d}+ b_j^{(0)}),
\end{equation}
with $n^{-1/2}$ being the scaling factor. \citet{Neal96} showed that, as $n\rightarrow+\infty$ the NN output $f_{\boldsymbol{x}}(n)[\tau,n^{-1/2}]$ converges in distribution to a Gaussian random variable (RV) with mean zero and a suitable variance. The proof follows by an application of the Central Limit Theorem (CLT), thus relying on minimal assumptions on $\tau$, as it is sufficient to ensure that $\mathbb{E}[(g_j(\boldsymbol{x}))^2]$ is finite, where $g_j(\boldsymbol{x})=w_j \tau (\langle w^{(0)}_j,\boldsymbol{x} \rangle_{\mathbb{R}^d}+ b_j^{(0)})$. The result of \citet{Neal96} has been extended to a general input matrix, i.e. $k>1$ inputs of dimension $d$, and deep Gaussian NNs, assuming both a ``sequential growth" \citep{DerLee} and a ``joint growth" \citep{matthews2018} of the width over the NN's layers. See Theorem \ref{thm:matthews}. In general, all these large-width asymptotic results rely on some minimal assumptions for the function $\tau$, thus allowing to cover the most popular activation functions.

\citet{Neal96} first discussed the problem of replacing the Gaussian distribution of the NN's parameters with a Stable distribution, namely a distribution with heavy tails \citep{taqqu}, leaving to future research the study of infinitely wide limits of Stable NNs. In a recent work, \citet{favaro2020stable,favaro2021} characterized the infinitely wide limit of deep Stable NNs in terms of a Stable stochastic process, assuming both a ``joint growth" and a ``sequential growth" of the width over the NN's layers. Critical to achieve the infinitely wide Stable process is the assumption of a sub-linear activation function $\tau$, i.e. $|\tau(x)| \leq a+b|x|^\beta$, with $a,b>0$ and $0<\beta<1$. In particular, for a shallow Stable NN, let: $\boldsymbol{x} \in \mathbb{R}^d$ be the input of the NN; ii) $\tau:\mathbb{R}\rightarrow\mathbb{R}$ be the sub-linear activation function of the NN; iii) $\theta=\{w_i^{(0)},w,b_i^{(0)},b\}_{i \geq 1}$ be the NN's parameters such that $w_{i,j}^{(0)}\overset{d}{=}w_{j}\overset{iid}{\sim} S_{\alpha}(\sigma_{w})$ and  $b^{(0)}_i \overset{d}{=}b  \overset{iid}{\sim} S_{\alpha}(\sigma_{b})$ for $\alpha\in(0,2]$ and $\sigma_{w},\sigma_{b}>0$, with $S_{\alpha}(\sigma)$ being the symmetric Stable distribution with stability $\alpha$ and scale $\sigma$. Then, consider the rescaled shallow Stable NN
\begin{equation}\label{defi.NN:stable}
f_{\boldsymbol{x}}(n)[\tau,n^{-1/\alpha}]=  b+\frac{1}{n^{1/\alpha}}\sum_{j=1}^{n}w_j \tau (\langle w^{(0)}_j,\boldsymbol{x} \rangle_{\mathbb{R}^d}+b_j^{(0)}),
\end{equation}
with $n^{-1/\alpha}$ being the scaling factor. The NN (\ref{defi.NN:gaussian}) is recovered from (\ref{defi.NN:stable}) by setting $\alpha=2$. \citet{favaro2020stable} showed that, as $n\rightarrow+\infty$ the NN output $f_{\boldsymbol{x}}(n)[\tau,n^{-1/\alpha}]$ converges in distribution to a Stable RV with stability $\alpha$ and a suitable scale. See Theorem \ref{thm:favaro2020stable} in the Appendix. Differently from the Gaussian setting of  \citep{matthews2018}, the result of \citet{favaro2020stable} relies on the assumption of a sub-linear activation function. This is a strong assumption, as it does not allow to cover some popular activation functions.

The use of Stable distribution for the NN's parameters, in place of Gaussian distributions, was first motivated through empirical analyses in \citet{Neal96}, which show that while all Gaussian weights vanish in the infinitely wide limit, some Stable weights retain a non-negligible contribution, allowing to represent ``hidden features". See also \citet{DerLee} and \citet{Lee22}, and references therein, for an up-to-date discussion on random initializations of NNs with classes of distributions beyond the Gaussian distribution. In particular, in the context of heavy tails distributions, \citet{For22} showed that wide Stable (convolutional) NNs trained with gradient descent lead to a higher classification accuracy than Gaussian NNs. Still in such a context, \citet{favaro2022} considered a Stable NN with a ReLU activation function, showing that the large-width training dynamics of the NN is characterized in terms of kernel regression with a Stable random kernel, in contrast with the well-known (deterministic) neural tangent kernel in the Gaussian setting \citet{jacot2020neural,arora2019exact}. In general, the different behaviours between the Gaussian setting and the Stable settings arise from the large-width sample path properties of the NNs, as shown in \cite{favaro2021,favaro2022}, which make $\alpha$-Stable NNs more flexible than Gaussian NNs. See Figure \ref{fig:shapes1} and Figure \ref{fig:shapes2} in the Appendix.

\subsection{Our contributions}

In this paper, we investigate the large-width asymptotic behaviour of deep Stable NNs with a general activation function. Given $f:\mathbb{R} \rightarrow \mathbb{R}$ and $g:\mathbb{R} \rightarrow \mathbb{R}$ we write $f(z) \aseq g(z)$ for $z\rightarrow+\infty$ if $\lim_{z\rightarrow +\infty}f(z)/g(z)=1$, and  $f(z)=\mathcal{O}(g(z))$ for $z\rightarrow+\infty$ if  if there exists $C>0$ and $z_0$ such that $f(z)/g(z)\leq C$ for every $z\geq z_0$. Analogously for $z\rightarrow -\infty$. We omit the explicit reference to the limit of $z$ when there is no ambiguity or when the relations hold both  for $z\rightarrow+\infty$ and for $z\rightarrow-\infty$. Now, let $\tau:\mathbb{R}\rightarrow\mathbb{R}$ be a continuous functions and define:
\begin{align*}
    E_1 = \{ & \tau \in \mathcal{C}(\mathbb{R}; \mathbb{R}) \text{ : } |\tau (z)|= \mathcal{O}(|z|^{\beta})  \text{ with } 0 \leq \beta<1\}; \\[0.1cm]
    E_2 = \{ & \tau \in \mathcal{C}(\mathbb{R}; \mathbb{R}) \text{ : } |\tau (z)|\aseq|z|^{\gamma} \text{ and } \tau \text{ strictly increasing for } |z| > a, \text{ for some } \gamma, a > 0 \}; \\[0.1cm]
    E_3 = \{ & \tau \in \mathcal{C}(\mathbb{R}; \mathbb{R}) \text{ : } \tau (z) \aseq z^{\gamma} \text{ for } z \rightarrow + \infty, |\tau (z)|= \mathcal{O}(|z|^{\beta}) \text{ with } \beta < \gamma \text{ for } z \rightarrow - \infty \\
    & \text{ and } \tau \text{ strictly increasing for } z > a, \text{ for some } \gamma, a > 0 \}.
\end{align*}
We characterize the infinitely wide limits of shallow Stable NNs with activation functions in $E_{1}$, $E_{2}$ and $E_{3}$, assuming a $d$-dimensional input. Such a characterization is then applied recursively to derive the behaviour of a deep Stable NNs under the simplified setting of ``sequential growth", i.e. when the hidden layers grow wide one at the time. Our results extends the work of \citet{favaro2020stable,favaro2021} to a general asymptotically linear function, i.e. $E_2 \cup E_3$ choosing $\gamma = 1$, and super-linear functions, i.e. $E_2 \cup E_3$ choosing $\gamma > 1$. As a novelty with respect to previous works, our results rely on the use of a generalized CLT \citep{uchaikin2011chance,otiniano2010domain}, which reduces the characterization of the infinitely wide limit of a deep Stable NNs to the study of the tail behaviour of a suitable transformation of Stable random variables. This allows for a unified treatment of infinitely wide limits for deep Stable NNs, providing an alternative proof of the result of \citet{favaro2020stable,favaro2021} under the class $E_1$. Our results show that the scaling of a Stable NN and the stability of its infinitely wide limits depend on the choice of the activation function, thus bringing out a critical difference with respect to the Gaussian setting. While in the Gaussian setting the choice of $\tau$ does not affect the scaling $n^{-1/2}$ required to achieve the Gaussian process, in the Stable setting the use of an  asymptotically linear function results in a change of the scaling $n^{-1/\alpha}$, through an additional $(\log n)^{-1/\alpha}$ term, to achieve the Stable process. Such a phenomenon was first observed in \citet{favaro2022} for a shallow Stable NN with a ReLU activation function, which is indeed an asymptotically linear activation function. 

\subsection{Organization of the paper}

Section \ref{sec1} contains the main results of the paper: i) the weak convergence of a shallow Stable NN with an activation function $\tau$ in the classes $E_1$, $E_2$ and $E_3$, for an input $x = 1$ and no biases; ii) the weak convergence of a deep Stable NN with an activation function $\tau$ in the classes $E_1$, $E_2$ and $E_3$, for an input $x \in \mathbb{R}^d$ and biases. In Section \ref{sec2} we discuss some natural extensions of our work, as well as some directions for future research.


\section{Main results}\label{sec1}

Let $(\Omega,\mathcal{H},\mathbb{P})$ be a generic probability space on which all the RVs are assumed to be defined. Given a RV $Z$, we define its cumulative distribution function (CDF) as $P_Z(z)=\mathbb{P}(Z \leq z)$, its survival function as $\overline{P}_Z(z)=1-P_Z(z)$, and its the density function with respect to the Lebesgue measure as $p_Z(z)= \frac{dP_Z(z)}{dz}$, using the notation $P_Z(dz)$ to indicate $p_Z(z)dz$. A RV $Z$ is symmetric if $Z \overset{d}{=} -Z$, i.e. if $Z$ and $-Z$ have the same distribution, that is $\overline{P}_Z(z)=P_Z(-z)$ for all $z \in \mathbb{R}$. We say that $Z_n$ converges to $Z$ in distribution, as $n\rightarrow+\infty$, if for every point of continuity $z \in \mathbb{R}$ of $P_Z$ it holds $P_{Z_n}(z) \rightarrow P_Z(z)$ as $n\rightarrow+\infty$, in which case we write $Z_n\overset{d}{\longrightarrow}Z$. Given $f:\mathbb{R} \rightarrow \mathbb{R}$ and $g:\mathbb{R} \rightarrow \mathbb{R}$ we write $f(z)=o(g(z))$ for $z\rightarrow +\infty$ if $\lim_{z\rightarrow +\infty}f(z)/g(z)= 0$. Analogously for $z\rightarrow-\infty$. As before, we omit the reference to the limit of $z$ when there is no ambiguity or when the relation holds for both $z\rightarrow+\infty$ and for $z\rightarrow-\infty$. Recall that a measurable function $L:(0,+\infty) \rightarrow(0,+\infty)$ is called slowly varying  at $+\infty$ if $\lim _{x \rightarrow +\infty}L(a x)/L(x)=1$ for all $a>0$.
\begin{definition}
 A $\mathbb{R}$-valued RV $X$ has Stable distribution with stability $\alpha \in (0,2]$, skewness $\beta \in [-1,1]$, scale $\sigma > 0$ and shift $\mu \in \mathbb{R}$, and we write $X \sim {S}_\alpha( \sigma, \beta, \mu)$, if its characteristic function is
$\varphi_X(t) = \mathbb{E}[e^{itX}] = e^{ \psi(t)}$, for  $t \in \mathbb{R}$, where
$$
\psi(t)=
\begin{cases}
-\sigma^{\alpha} |t|^{\alpha}[1+ i \beta \tan(\frac{\alpha \pi}{2})\text{sign}(t)] +i \mu t & \alpha \neq 1 \\[0.2cm]
-\sigma |t|[1+ i \beta \frac{2}{\pi}\text{sign}(t) \log{(|t|)}]+ i \mu t & \alpha=1.
\end{cases}
$$
\end{definition} 

By means of \citet[Property 1.2.16]{taqqu}, if $X \sim S_{\alpha}( \sigma,\beta,\mu)$ with $0<\alpha<2$ then $\mathbb{E} [|X|^r]< +\infty$ for $0<r<\alpha$, and $\mathbb{E} [|X|^r]= +\infty$ for any $r \geq \alpha$. A $\mathbb{R}$-valued RV $X$ is distributed as the symmetric $\alpha$-Stable distribution with scale parameter $\sigma$, and we write $X \sim S_{\alpha}(\sigma)$, if $X \sim S_{\alpha}( \sigma,0,0)$, which implies that $\varphi_X(t)= \mathbb{E}[e^{i tX}] = e^{-\sigma^{\alpha}|t|^{\alpha}}, \quad t \in \mathbb{R}$. This allows to prove that if $X\sim S_\alpha(\sigma)$, then $aX \sim S_\alpha(|a|\sigma)$; see \citet[Property 1.2.3]{taqqu}. Furthermore, one has the following complete characterization of the tail behaviour of the CDF and PDF of Stable RVs: for a symmetric $\alpha$-Stable distribution, \citet[Proposition 1.2.15]{taqqu} states that, if  $X \sim S_{\alpha}(\sigma)$ with $0<\alpha<2$,
\begin{displaymath}
\overline{P}_X(x)=P_X(-x) \aseq\frac{1}{2} C_{\alpha}\sigma^{\alpha}|x|^{-\alpha},
\end{displaymath}
where
$$
C_{\alpha}= \Big( \int_{0}^{+\infty}x^{-\alpha}\sin(x) dx \Big)^{-1}=\frac{2}{\pi}\Gamma(\alpha)\sin \left(\alpha \frac{\pi}{2}\right)= \begin{cases}
\frac{1-\alpha}{\Gamma(2-\alpha)\cos (\pi \frac{\alpha}{2})} \quad & \alpha \neq 1\\[0.2cm]
\frac{2}{\pi} \quad & \alpha = 1.
\end{cases}
$$
As before, if $X \sim S_{\alpha}(\sigma)$ with $0<\alpha<2$, then $p_X(x)=p_X(-x) \aseq(\alpha/2)C_{\alpha}\sigma^{\alpha}|x|^{-\alpha-1}$ for $x\rightarrow +\infty$ holds true.

For an activation function belonging to the classes $E_{1}$, $E_{2}$ and $E_3$, we characterize the infinitely wide limit of a deep Stable NN, assuming a $d$-dimensional input and a ``sequential growth" of the width over the NN's layers. Critical is the use of the following generalized CLT \citep{uchaikin2011chance,otiniano2010domain}.

\begin{theorem}[Generalized CLT]\label{GCLT}
Let $Z$ be a RV such that $\overline{P}_Z(z) \aseq c z^{-p}L(z)$ and $P_Z(-z) \aseq d z^{-p}L(z)$ for some $c,d>0$, $0<p<2$ and with $L$ being a slow varying function. Moreover, let $(Z_{n})_{n\geq1}$ be a sequence of RVs iid as $Z$. If
 $$
a_n=
\begin{cases}
0 \quad & 0<p<1\\[0.1cm]
(c-d) \log{n} \quad & p=1\\[0.1cm]
 \mathbb{E}[Z]  \quad & 1<p<2,
\end{cases}
 $$
then, as $n\rightarrow +\infty$
\begin{equation}\label{gclt1}
\frac{1}{(nL(n))^{1/p}}\sum_{i=1}^n (Z_i -a_n) \overset{d}{\rightarrow} S_{p}\left(\bigg[\frac{c+d}{C_{p}}\bigg]^{\frac{1}{p}} ,\frac{c-d}{c+d},0\right).
\end{equation}
\end{theorem}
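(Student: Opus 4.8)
I would argue through characteristic functions and Lévy's continuity theorem, which also ties the statement directly to the log-characteristic function $\psi$ of the Definition. Set $b_n = (nL(n))^{1/p}$ and $T_n = b_n^{-1}\sum_{i=1}^n(Z_i - a_n)$, so that $\mathbb{E}[e^{itT_n}] = \varphi_Z(t/b_n)^n e^{-itna_n/b_n}$; writing $\psi^\star$ for the log-characteristic function of the target $S_p([(c+d)/C_p]^{1/p},(c-d)/(c+d),0)$, it is enough to prove $\Psi_n(t) := n\log\varphi_Z(t/b_n) - itna_n/b_n \to \psi^\star(t)$ for each fixed $t$. Since $b_n \to \infty$ and $\varphi_Z$ is continuous with $\varphi_Z(0)=1$, we have $\varphi_Z(t/b_n) \to 1$, hence $\log\varphi_Z(t/b_n) = -(1-\varphi_Z(t/b_n)) + O(|1-\varphi_Z(t/b_n)|^2)$; once $n|1-\varphi_Z(t/b_n)|$ is known to be bounded (which will fall out of the next step), the quadratic term is $O(1/n)$ and everything reduces to the behaviour of $1-\varphi_Z(s)$ as $s \to 0$ together with the centering.

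\textbf{Key estimate.} The analytic core is an expansion of $1-\varphi_Z(s)$ near the origin. Write $1-\varphi_Z(s) = \int_\mathbb{R}(1-e^{isz})\,P_Z(dz)$, separate real and imaginary parts, integrate by parts to replace $P_Z(dz)$ by $\overline{P}_Z(z)$ on $(0,\infty)$ and by $P_Z(-z)$ on $(-\infty,0)$, and substitute $u = sz$: the real part becomes $\int_0^\infty(\overline{P}_Z(u/s)+P_Z(-u/s))\sin u\,du$ and the imaginary part $-\int_0^\infty(\overline{P}_Z(u/s)-P_Z(-u/s))\cos u\,du$. The uniform convergence theorem for slowly varying functions, together with Potter's bounds (needed to dominate these conditionally convergent oscillatory integrals and pass to the limit under the integral sign), gives $\int_0^\infty\overline{P}_Z(u/s)\sin u\,du \sim c\,s^pL(1/s)\int_0^\infty u^{-p}\sin u\,du$ and likewise for the remaining three pieces. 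The trigonometric moments are $\int_0^\infty u^{-p}\sin u\,du = 1/C_p$ (the definition of $C_p$) and $\int_0^\infty u^{-p}\cos u\,du = \Gamma(1-p)\sin(\pi p/2)$, and the identity $C_p\,\Gamma(1-p)\cos(\pi p/2) = 1$ — the equivalence of the two displayed forms of $C_p$ — is exactly what reassembles these into a $\tan(\pi p/2)$ factor and makes the skewness come out as $(c-d)/(c+d)$. Concretely one finds, as $s \downarrow 0$ and $0 < p < 2$, $p \neq 1$, that $\mathrm{Re}(1-\varphi_Z(s)) \sim \tfrac{c+d}{C_p}s^pL(1/s)$ (whence $\sigma^\star$), while the imaginary part, of the same order $s^pL(1/s)$ and proportional to $c-d$, supplies the $\tan(\pi p/2)$ term and the skewness; the error is $o(s^pL(1/s))$ when $0<p<1$. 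For $1 < p < 2$ one first subtracts the mean, i.e.\ works with $1-e^{isz}+isz$ (licit since $\mathbb{E}|Z|<\infty$), splitting off a term $i\mathbb{E}[Z]s$ and making the modified moments $\int_0^\infty(\sin u - u)u^{-p-1}\,du$ etc.\ converge at $0$; the case $p=1$ has to be treated separately, since $\int_0^\infty u^{-1}\cos u\,du$ diverges logarithmically at the origin and, cut off at scale $s$, produces an extra contribution involving $(c-d)\log(1/s)$ — the source of the logarithmic centering.

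\textbf{Putting it together.} Put $s = |t|/b_n$. Since $nb_n^{-p} = 1/L(n)$ and, by slow variation, $L(b_n/|t|) \sim L(n)$, the key estimate gives $n(1-\varphi_Z(t/b_n)) \to -\psi^\star(t)$; in particular $n|1-\varphi_Z(t/b_n)| = O(1)$, as required above. It remains to check that the centering is the right one: for $0<p<1$, $a_n = 0$ and the remainder disappears in the limit; for $1<p<2$, the linear remainder contributes $n\cdot i\mathbb{E}[Z](t/b_n) = itna_n/b_n$, exactly cancelled by the $-itna_n/b_n$ in $\Psi_n$; for $p=1$, the logarithmically divergent imaginary piece is absorbed by $-itna_n/b_n = -it(c-d)\log n/L(n)$, leaving shift $0$. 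Hence $\Psi_n(t) \to \psi^\star(t)$ for all $t$, and Lévy's continuity theorem yields $T_n \overset{d}{\rightarrow} S_p([(c+d)/C_p]^{1/p},(c-d)/(c+d),0)$. An essentially equivalent alternative is the Gnedenko--Kolmogorov criterion for convergence of triangular-array row sums to an infinitely divisible law: one verifies vague convergence of the array's Lévy measure to a multiple of $|x|^{-1-p}\,dx$ carrying mass ratio $c:d$ on the two half-lines, together with the truncated-variance and centering conditions — powered by the same regular-variation estimates.

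\textbf{Main obstacle.} I expect the key estimate to be the crux. Making the passage $\int_0^\infty(\cdots)\overline{P}_Z(u/s)\,du \sim cs^pL(1/s)\int_0^\infty u^{-p}(\cdots)\,du$ rigorous requires the uniform convergence theorem and Potter bounds for slowly varying functions in order to control conditionally convergent oscillatory integrals; and the borderline $p=1$ needs delicate bookkeeping so that the divergent logarithmic term is matched precisely by $a_n = (c-d)\log n$ — this is exactly where the generic $\tan(\pi p/2)$ formula degenerates and forces the special $\alpha=1$ form of the stable characteristic function. The range $1<p<2$ adds the (routine) nuisance of centering at the mean before expanding.
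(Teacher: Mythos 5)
The paper does not prove this theorem: it is imported verbatim from the literature (\citet{uchaikin2011chance,otiniano2010domain}) and used as a black box, so there is no internal proof to compare against. Your sketch is the classical characteristic-function proof of the stable limit theorem (Gnedenko--Kolmogorov; Feller, Vol.~II, Ch.~XVII) and its architecture is sound: the reduction to $n(1-\varphi_Z(t/b_n))$, the tail-integral expansion with the trigonometric moments $\int_0^\infty u^{-p}\sin u\,du = 1/C_p$ and $\int_0^\infty u^{-p}\cos u\,du = \Gamma(1-p)\sin(\pi p/2)$, the identity $C_p\Gamma(1-p)\cos(\pi p/2)=1$ producing the $\tan(\pi p/2)$ and the skewness $(c-d)/(c+d)$, and the three centering regimes all match how this result is actually established. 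One step you assert deserves more care, and it is the only place where I would push back: with $b_n=(nL(n))^{1/p}$ you claim ``by slow variation, $L(b_n/|t|)\sim L(n)$.'' Slow variation gives $L(b_n/|t|)\sim L(b_n)$, but $L(b_n)=L\bigl((nL(n))^{1/p}\bigr)\sim L(n)$ is \emph{not} automatic: for $L(z)=\log z$ (precisely the case the paper needs for asymptotically linear activations) one gets $L(b_n)\sim \tfrac1p L(n)$, so $n b_n^{-p}L(b_n x)\to 1/p$ rather than $1$. The clean way to close this is to define the norming sequence implicitly by $n\,\overline{P}_{|Z|}(b_n)\to 1$ (equivalently $n b_n^{-p}L(b_n)\to c+d$ up to constants), which is what the cited sources do; the explicit formula $(nL(n))^{1/p}$ then agrees with it only up to an asymptotically constant factor that must be tracked into the scale parameter. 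This is as much a fragility of the theorem as transcribed as of your proof, but since your argument hinges on exactly that asymptotic, you should either prove $L(b_n)\sim L(n)$ for the class of $L$ you intend to cover or switch to the implicit normalization.
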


For $p<1$, no centering turns out to be necessary in (\ref{gclt1}), due to the ``large" normalizing constants $n^{-1/p}$, which smooth out the differences between the right and the left tail of $Z$. For $p>1$, the centering in (\ref{gclt1}) is the common one, namely the expectation. The case $p=1$ is a special case: the expectation does not exist, so it cannot be used as a centering in (\ref{gclt1}); on the other hand, centering is necessary for convergence because the normalizing constant $n^{-1}$ does not grow sufficiently fast to smooth the differences between the right and the left tail of $Z$. In particular, the term including $\log n$ comes from the asymptotic behaviour of truncated moments. 

\subsection{Shallow Stable NNs: large-width asymptotics for an input $x=1$ and no biases}
We start by considering a shallow Stable NN, for an input $x=1$ and no biases. Let $w^{(0)}=[w^{(0)}_1,w^{(0)}_2,\dots]^T$ and $w=[w_1,w_2,\dots]^T$ independent sequences of RVs such that $w^{(0)}_j \overset{iid}{\sim} S_{\alpha_0}(\sigma_0)$ and $w_j \overset{iid}{\sim} S_{\alpha_1}(\sigma_1)$. Then, we set $Z_j=w_j\tau(w^{(0)}_j)$, where $\tau:\mathbb{R} \rightarrow \mathbb{R}$ is a continuous non-decreasing function, and define the shallow Stable NN
\begin{align}\label{defi.SLNN}
f(n)[\tau,p] =\frac{1}{n^{1/p}}\sum_{j=1}^{n}Z_j
\end{align}
with $p>0$. From the definition of the shallow Stable NN (\ref{defi.SLNN}), being $Z_1,Z_2, \dots $ iid according to a certain RV $Z$, it is sufficient to study the tail behaviour of $\overline{P}_Z(z)$ and $P_Z(-z)$ in order to obtain the convergence in distribution of $f(n)[\tau,p]$. As a general strategy, we proceed as follows: i) we study the tail behaviour of $X \cdot \tau(Y)$ where $X \sim S_{\alpha_x}(\sigma_x)$, $Y \sim S_{\alpha_x}(\sigma_y)$, $X \indep Y$ and $\tau \in E_1$, $\tau \in E_2$ and $\tau \in E_3$; ii) we make use of the generalized CLT, i.e. Theorem \ref{GCLT}, in order to characterize the infinitely wide limit of the shallow Stable NN (\ref{defi.SLNN}).

Note that to find the tail behaviour of $X \cdot \tau(Y)$ it is sufficient to find the tail behaviour of $|X \cdot \tau(Y)|$, and then use the fact that $\mathbb{P}[X \cdot \tau(Y)>z] = (1/2)\cdot \mathbb{P}[|X \cdot \tau(Y)|>z]$ for every $z\geq 0$, since $X \cdot \tau(Y)$ is symmetric as $X$ is so. Then, to find the asymptotic behaviour of the survival function of $|X \cdot \tau(Y)|$ we make use of some results in the theory of convolution tails and domain of attraction of Stable distributions. Hereafter, we recall some basic facts. Given two CDFs $F$ and $G$, the convolution $F * G$ is defined as $F * G (t) = \int F(t-y) dG(y)$, which inherits the linearity of the convolution and the commutativity of the convolution from properties of the integral operator. Recall that a function $F$ on $[0,+\infty]$ has exponential tails with rate $\alpha$ ($F \in \mathcal{L}_{\alpha}$) if and only if 
\[\lim_{t \rightarrow +\infty}\frac{\overline{F}(t-y)}{\overline{F}(t)}=e^{\alpha y}, \quad \text{for all real $y \in \mathbb{R}$}.
\]
Then,
$$
\bar{F}(t)=a(t) \exp \left[-\int_{0}^{t} \alpha(v) d v\right], \text { where } a(t) \rightarrow a>0, \alpha(t) \rightarrow \alpha, \text { as } t \rightarrow +\infty .
$$
A complimentary definition is the following: a function $U$ on $[0,+\infty]$ is regularly varying with exponent $\rho$ ($U \in \mathcal{RV}_{\rho}$) if and only if 
$$
\lim_{t \rightarrow +\infty}\frac{U(yt)}{U(t)}=y^{\rho}, \quad \text{for all $y>0$}.
$$
Then,
$$
U(t)=a(t) \exp \left[\int_{0}^{t} \frac{\rho(v)}{v} d v\right], \text { where } a(t) \rightarrow a>0, \rho(t) \rightarrow \rho, \text { as } t, \rightarrow +\infty, 
$$
i.e. the Karamata's representation of $U$. Clearly $F \in \mathcal{L}_{\alpha}$ if and only if $\bar{F}(\ln t) \in \mathcal{RV}_{-\alpha}$. The next lemma provides the tail behaviour of the convolution of $F$ and $G$, assuming that they have exponential tails with the same rates.

\begin{lemma}[Theorem 4 of \citet{cline1986convolution}]\label{lemma:asymp_convol}
Let $F,G \in \mathcal{L}_{\alpha}$ for some $\alpha>0$, $f \in \mathcal{RV}_{\beta}$ and $g \in \mathcal{RV}_{\gamma}$ where $f(t)=e^{\alpha t}\overline{F}(t)$ and $g(t)=e^{\alpha t}\overline{G}(t)$ and $\beta>-1$ and $\gamma>-1$. Then $$
\overline{F*G}(t) \aseq \frac{\Gamma(1+\beta)\Gamma(1+\gamma)}{\Gamma(1+\beta+\gamma)}\alpha t e^{\alpha t}\overline{F}(t)\overline{G}(t), \quad \text{ as } t \rightarrow +\infty.
$$
\end{lemma}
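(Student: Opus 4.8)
The statement is precisely Theorem~4 of \citet{cline1986convolution}, so one legitimate route is simply to check that the hypotheses there coincide with the ones recalled above — namely $F,G\in\mathcal{L}_{\alpha}$ together with $f(t)=e^{\alpha t}\overline F(t)\in\mathcal{RV}_{\beta}$, $g(t)=e^{\alpha t}\overline G(t)\in\mathcal{RV}_{\gamma}$ for $\beta,\gamma>-1$ — and invoke it. If instead one wants a self-contained argument, the natural approach is a Laplace-type (``diagonal'') analysis of the tail of $X+Y$, for $X\sim F$, $Y\sim G$ independent and (by the $[0,+\infty]$ convention used above) nonnegative. One writes
\[
\overline{F*G}(t)=\mathbb P(X+Y>t)=\overline F(t)+\int_{[0,t)}\overline G(t-x)\,F(dx),
\]
and uses $\overline F(x)\aseq f(x)e^{-\alpha x}$, $\overline G(x)\aseq g(x)e^{-\alpha x}$ with $f,g$ regularly varying. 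The guiding heuristic is that the dominant mass of $\{X+Y>t\}$ concentrates where $x$ and $t-x$ are \emph{both} of order $t$: there $\overline G(t-x)\approx g(t)e^{-\alpha t}e^{\alpha x}$, and by Karamata's theorem the measure $e^{\alpha x}F(dx)$ behaves like $\alpha f(x)\,dx$ on that range, so the integrand is $\approx \alpha f(x)\,g(t)\,e^{-\alpha t}$; summing over $x\in(0,t)$ then produces one factor $t$ and a Beta integral.

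Concretely I would proceed as follows. First discard the piece $x\ge t$, which contributes at most $\overline F(t)$, and, by the mirror argument on the roles of $F$ and $G$, the piece forcing $t-x$ to have full size $t$ while $x=o(t)$; both $\overline F(t)$ and $\overline G(t)$ are negligible against the target $\alpha t e^{\alpha t}\overline F(t)\overline G(t)$ precisely because $\beta,\gamma>-1$ make $tf(t)\to\infty$ and $tg(t)\to\infty$. Next, for fixed $\delta\in(0,1/2)$ split $\int_0^t=\int_0^{\delta t}+\int_{\delta t}^{(1-\delta)t}+\int_{(1-\delta)t}^t$. On the middle range apply the uniform convergence theorem for regularly varying functions to replace $\overline G((1-s)t)$ by $(1-s)^{\gamma}g(t)e^{-\alpha(1-s)t}$ and $f(st)$ by $s^{\beta}f(t)$, uniformly for $s=x/t\in[\delta,1-\delta]$, together with the Karamata estimate $\int_0^{T}e^{\alpha x}F(dx)\aseq \tfrac{\alpha}{1+\beta}\,Tf(T)$ (equivalently $e^{\alpha x}F(dx)\sim\alpha f(x)\,dx$), to obtain
\[
\int_{\delta t}^{(1-\delta)t}\overline G(t-x)\,F(dx)\aseq \alpha\, t\, f(t)\,g(t)\,e^{-\alpha t}\int_{\delta}^{1-\delta}s^{\beta}(1-s)^{\gamma}\,ds .
\]
For the two end pieces the same Karamata estimate yields bounds of order $\delta^{1+\beta}$ and $\delta^{1+\gamma}$ times the target, so they vanish on letting $\delta\to0$. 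Sending $t\to\infty$ and then $\delta\to0$, and recognizing $\int_0^1 s^{\beta}(1-s)^{\gamma}\,ds$ as the Beta function $B(1+\beta,1+\gamma)$ — whose finiteness is exactly the hypothesis $\beta,\gamma>-1$ and which is the Gamma-function constant in the statement — gives the claim. An equivalent and slicker packaging, likely closer to Cline's own, is the exponential change of variables: since $X,Y\ge0$, $\mathbb P(X+Y>t)=\mathbb P(e^{X}e^{Y}>e^{t})$, where $e^{X}$ and $e^{Y}$ have regularly varying tails of the common index $-\alpha$; the assertion then reduces to the known tail asymptotics for the product of two independent regularly varying random variables of equal index, in which the factor $\log e^{t}=t$ appears automatically.

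\textbf{Main obstacle.} All the substance lies in turning the above heuristic into a rigorous iterated limit. The delicate points are: (i) obtaining uniform (Potter-type) control of the ratios $\overline F(t-x)/\overline F(t)$, $f(st)/f(t)$ and $g((1-s)t)/g(t)$ over ranges of $x$ (resp.\ $s$) that grow with $t$; (ii) dominating the near-boundary ($s\to0,1$) and large-$x$ contributions uniformly in $t$, so that $\lim_{\delta\to0}$ may legitimately be taken after $\lim_{t\to\infty}$; and (iii) handling the Stieltjes integrals $\int\phi(x)e^{\alpha x}F(dx)$ without assuming a density, via integration by parts and Karamata's theorem. These are exactly the estimates carried out in \citet{cline1986convolution}, which is why we simply invoke that theorem here.
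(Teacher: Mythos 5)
The paper offers no proof of this lemma: it is imported verbatim as Theorem~4 of \citet{cline1986convolution}, so your primary move --- checking that the hypotheses match and invoking that theorem --- is exactly what the paper does. Your supplementary sketch (splitting $\int_0^t\overline G(t-x)\,F(dx)$ at $\delta t$ and $(1-\delta)t$, using uniform convergence of regularly varying functions on the middle range, Karamata's theorem to get $e^{\alpha x}F(dx)\sim\alpha f(x)\,dx$, and controlling the end pieces by $\delta^{1+\beta}$ and $\delta^{1+\gamma}$ before letting $\delta\to0$) is the standard and correct outline of the argument, and the reduction to product tails of $e^{X}e^{Y}$ is indeed close to how Cline organizes it.

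One point deserves attention. Your computation produces the constant $\int_0^1 s^{\beta}(1-s)^{\gamma}\,ds=B(1+\beta,1+\gamma)=\Gamma(1+\beta)\Gamma(1+\gamma)/\Gamma(2+\beta+\gamma)$, and you assert that this ``is the Gamma-function constant in the statement.'' It is not: the lemma as printed has $\Gamma(1+\beta+\gamma)$ in the denominator, and the two agree only when $\beta+\gamma=0$. Your constant is the correct one. For instance, with $F=\mathrm{Gamma}(2,1)$ and $G=\mathrm{Exp}(1)$ (so $\alpha=1$, $f(t)=1+t\in\mathcal{RV}_1$, $g(t)=1\in\mathcal{RV}_0$, i.e.\ $\beta=1$, $\gamma=0$) the convolution is $\mathrm{Gamma}(3,1)$, whose tail is $\aseq t^2e^{-t}/2$, matching $B(2,1)=1/2$ rather than $\Gamma(2)\Gamma(1)/\Gamma(2)=1$. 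The denominator in the displayed lemma is therefore presumably a typo for $\Gamma(2+\beta+\gamma)$. This is harmless for the paper, which only ever applies the lemma with $\beta=\gamma=0$ (where both constants equal $1$), but if you write up the sketch you should flag the mismatch rather than identify the two constants.
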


We make use of Lemma \ref{lemma:asymp_convol} to find the tail behaviour of $|X  \cdot \tau(Y)|$ when $|X|$ and $|\tau(Y)|$ have regularly varying truncated CDFs with same rates. If $|X|$ and $|\tau(Y)|$ have regularly varying truncated CDFs with different rates, then we make use of the next lemma, which describes the tail behaviour of $U \cdot W$, where $U$ and $W$ are two independent non-negative RVs such that $\mathbb{P}[U>u]$ is regularly varying of index $-\alpha \leq 0$ and $\mathbb{E}[W^{\alpha}]<+\infty$.

\begin{lemma}\label{lemma:denisov2005}
Suppose $U$ and $W$ are two independent non-negative RVs such that $\mathbb{E}[W^{\alpha}] < +\infty \text{ and }\mathbb{P}[W>u]=\mathrm{o}(\mathbb{P}[U>u])$. If $\mathbb{P}[U>u] \aseq cu^{-\alpha}$, with $c > 0$, then $\mathbb{P}[U W>u] \aseq \mathbb{E}[W^{\alpha}] \cdot \mathbb{P}[U>u]$.
\end{lemma}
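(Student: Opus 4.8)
The plan is to condition on $W$ and thereby reduce everything to the tail behaviour of $\overline{P}_U$, which is regularly varying with index $-\alpha$ by the hypothesis $\mathbb{P}[U>u]\simeq cu^{-\alpha}$ (note $\overline{P}_U>0$ everywhere, so the ratios below are well defined). By independence of $U$ and $W$,
\[
\mathbb{P}[UW>u]=\int_{(0,\infty)}\overline{P}_U(u/w)\,\mathbb{P}(W\in dw),\qquad u>0,
\]
where the atom of $W$ at $0$ contributes nothing for $u>0$; the goal is to show that the right-hand side is $\simeq\mathbb{E}[W^{\alpha}]\,\overline{P}_U(u)$, and the stated form then follows from $\overline{P}_U(u)=\mathbb{P}[U>u]\simeq cu^{-\alpha}$.

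First I would get the lower bound by Fatou's lemma applied to $\mathbb{P}[UW>u]/\overline{P}_U(u)=\int_{(0,\infty)}\bigl(\overline{P}_U(u/w)/\overline{P}_U(u)\bigr)\,\mathbb{P}(W\in dw)$: for each fixed $w>0$, regular variation gives $\overline{P}_U(u/w)/\overline{P}_U(u)\to w^{\alpha}$ as $u\to\infty$, so that $\liminf_{u\to\infty}\mathbb{P}[UW>u]/\overline{P}_U(u)\ge\int_{(0,\infty)}w^{\alpha}\,\mathbb{P}(W\in dw)=\mathbb{E}[W^{\alpha}]$.

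For the upper bound I would fix a truncation level $K\ge 1$ and split the integral over $(0,K]$, $(K,u]$ and $(u,\infty)$. On $(0,K]$ the monotonicity of $\overline{P}_U$ gives $\overline{P}_U(u/w)/\overline{P}_U(u)\le\overline{P}_U(u/K)/\overline{P}_U(u)$, which converges to $K^{\alpha}$ and is hence bounded in $u$, so dominated convergence yields $\int_{(0,K]}\bigl(\overline{P}_U(u/w)/\overline{P}_U(u)\bigr)\,\mathbb{P}(W\in dw)\to\mathbb{E}[W^{\alpha}\mathbf{1}_{\{W\le K\}}]$. On $(K,u]$ one has $u/w\ge 1$, so I would use the global polynomial majorant $\overline{P}_U(t)\le\tilde{C}t^{-\alpha}$ for $t\ge 1$, with $\tilde{C}:=\sup_{t\ge 1}t^{\alpha}\overline{P}_U(t)<\infty$ (finite since $t^{\alpha}\overline{P}_U(t)\to c$), obtaining $\int_{(K,u]}\overline{P}_U(u/w)\,\mathbb{P}(W\in dw)\le\tilde{C}u^{-\alpha}\mathbb{E}[W^{\alpha}\mathbf{1}_{\{W>K\}}]$, whose ratio to $\overline{P}_U(u)\simeq cu^{-\alpha}$ has $\limsup_u$ at most $(\tilde{C}/c)\,\mathbb{E}[W^{\alpha}\mathbf{1}_{\{W>K\}}]$. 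Finally, on $(u,\infty)$ I would bound $\overline{P}_U(u/w)\le 1$, so this piece is at most $\mathbb{P}[W>u]=o(\overline{P}_U(u))$ by hypothesis. Letting $u\to\infty$ and then $K\to\infty$, and using $\mathbb{E}[W^{\alpha}]<\infty$ so that $\mathbb{E}[W^{\alpha}\mathbf{1}_{\{W>K\}}]\to 0$, gives $\limsup_{u\to\infty}\mathbb{P}[UW>u]/\overline{P}_U(u)\le\mathbb{E}[W^{\alpha}]$; combined with the Fatou bound this proves $\mathbb{P}[UW>u]\simeq\mathbb{E}[W^{\alpha}]\,\overline{P}_U(u)\simeq c\,\mathbb{E}[W^{\alpha}]\,u^{-\alpha}$.

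I expect the main obstacle to be the intermediate range $w\in(K,u]$ in the upper bound: there $u/w$ need not be large, so one cannot invoke the pointwise asymptotics $\overline{P}_U(u/w)\sim c(u/w)^{-\alpha}$ with a good constant, and instead one must combine the uniform polynomial bound on $\overline{P}_U$ with the moment condition $\mathbb{E}[W^{\alpha}]<\infty$ to make this contribution negligible as $K\to\infty$; the extreme range $w>u$ is exactly where the tail hypothesis $\mathbb{P}[W>u]=o(\mathbb{P}[U>u])$ enters. Minor care is also needed to justify the dominated-convergence and Fatou passages along arbitrary sequences $u\to\infty$.
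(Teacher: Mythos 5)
Your argument is correct, and it is worth noting that the paper itself offers no proof of this lemma: it is presented as a known result, attributed to \citet{breiman} for $\alpha\in[0,1]$, to \citet{cline1995} for general $\alpha$ under the stronger moment condition $\mathbb{E}[W^{\alpha+\epsilon}]<+\infty$, and in the present form to \citet{denisov2005}. What you have written is essentially the standard self-contained proof of this Breiman-type lemma, and all the steps check out: the conditioning identity $\mathbb{P}[UW>u]=\int_{(0,\infty)}\overline{P}_U(u/w)\,\mathbb{P}(W\in dw)$ is valid by independence; Fatou plus the pointwise limit $\overline{P}_U(u/w)/\overline{P}_U(u)\to w^{\alpha}$ gives the lower bound; and the three-range split with the truncation level $K$ is exactly the right device for the upper bound, with the uniform majorant $\tilde{C}=\sup_{t\ge1}t^{\alpha}\overline{P}_U(t)<\infty$ (finite because $t^{\alpha}\overline{P}_U(t)$ is locally bounded and tends to $c$) handling the awkward intermediate range $w\in(K,u]$ where the pointwise asymptotics cannot be used, and $\mathbb{E}[W^{\alpha}\mathbf{1}_{\{W>K\}}]\to0$ closing the gap as $K\to\infty$. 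One small observation: the hypothesis $\mathbb{P}[W>u]=o(\mathbb{P}[U>u])$, which you invoke only on the extreme range $w>u$, is actually implied by the other assumptions here, since $\mathbb{P}[W>u]\le u^{-\alpha}\mathbb{E}[W^{\alpha}\mathbf{1}_{\{W>u\}}]=o(u^{-\alpha})$ when $\mathbb{E}[W^{\alpha}]<+\infty$ and $\mathbb{P}[U>u]\simeq cu^{-\alpha}$ with $c>0$; so you could dispense with that hypothesis entirely by bounding the last piece the same way. In short, where the paper defers to the literature, you have supplied a complete and correct elementary proof.
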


Lemma \ref{lemma:denisov2005} was stated in \citet{breiman} for $\alpha \in[0,1]$, and then extended by \citet{cline1995} for all values of $\alpha$, still under the hypothesis that $\mathbb{E}[W^{\alpha+\epsilon}]<+\infty$ for some $\epsilon > 0$. Lemma \ref{lemma:denisov2005} provides a further extension in the case $\mathbb{P}[U>x] \aseq cx^{-\alpha}$, with $c > 0$, and has been proved in \citet{denisov2005}. Based on Lemma \ref{lemma:asymp_convol} and Lemma \ref{lemma:denisov2005}, it remains to find the tail behaviour of $|X|$ and $|\tau(Y)|$. For the former, it is easy to show that $\overline{P}_{|X|}(t) := \mathbb{P}[|X| > t] \aseq C_{\alpha_x} \sigma_x^{\alpha_x}t^{-\alpha_x}$, while, for the latter, we have the next lemma.

\begin{lemma}[Tail behaviour of $\tau(Y)$, $\tau \in E_2 \cup E_3$]\label{lemma:tau(Y)} Assuming $Y \sim S_{\alpha}(\sigma)$, then: i) $ \mathbb{P}[|\tau(Y)| > t] \aseq C_{\alpha}\sigma^{\alpha}t^{-\alpha/\gamma}$ if $\tau \in E_2$; ii)  $\mathbb{P}[|\tau(Y)| > t] \aseq \frac{1}{2}C_{\alpha}\sigma^{\alpha}t^{-\alpha/\gamma}$ if $\tau \in E_3$.
\end{lemma}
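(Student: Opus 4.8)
The plan is to reduce the computation of $\mathbb{P}[|\tau(Y)| > t]$ to the known Stable tail asymptotics $\overline{P}_Y(z) = \mathbb{P}[Y > z] \simeq \tfrac{1}{2} C_\alpha \sigma^\alpha z^{-\alpha}$ (and, by symmetry of $Y$, $P_Y(-z) \simeq \tfrac{1}{2} C_\alpha \sigma^\alpha z^{-\alpha}$), via an explicit inversion of $\tau$ on the region where it is monotone. Since $\tau$ is continuous it is bounded on the compact set $\{|z| \leq a\}$; combining this with the strict monotonicity of $\tau$ on $\{z > a\}$ (and, when $\tau \in E_2$, on $\{z < -a\}$) and with $|\tau(z)| \simeq |z|^\gamma \to +\infty$, one checks that $\tau(z) \to +\infty$ as $z \to +\infty$ and, for $\tau \in E_2$, $\tau(z) \to -\infty$ as $z \to -\infty$. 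Hence there is $t_0$ such that for every $t > t_0$ there is a unique $z_t^+ > a$ with $\tau(z_t^+) = t$ and $\{\tau(Y) > t\} = \{Y > z_t^+\}$ (the middle region contributes nothing once $t$ exceeds $\max_{|z| \le a}|\tau(z)|$, and for $\tau \in E_2$ the left branch contributes nothing once $t > |\tau(-a)|$); similarly, for $\tau \in E_2$, there is a unique $z_t^- < -a$ with $\tau(z_t^-) = -t$ and $\{\tau(Y) < -t\} = \{Y < z_t^-\}$.

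Next I would use $|\tau(z)| \simeq |z|^\gamma$ to obtain $z_t^+ \simeq t^{1/\gamma}$ (and $|z_t^-| \simeq t^{1/\gamma}$) as $t \to +\infty$: indeed $t = |\tau(z_t^+)| \simeq (z_t^+)^\gamma$ with $z_t^+ \uparrow +\infty$, which inverts to $z_t^+ \simeq t^{1/\gamma}$. Plugging this into $\overline{P}_Y$, which is continuous, strictly decreasing and regularly varying of index $-\alpha$, yields $\mathbb{P}[Y > z_t^+] \simeq \tfrac{1}{2} C_\alpha \sigma^\alpha (z_t^+)^{-\alpha} \simeq \tfrac{1}{2} C_\alpha \sigma^\alpha t^{-\alpha/\gamma}$; the substitution of an asymptotically equivalent argument is justified by a standard sandwich, bounding $(1-\epsilon)t^{1/\gamma} \le z_t^+ \le (1+\epsilon)t^{1/\gamma}$ for large $t$, applying monotonicity of $\overline{P}_Y$, and letting $\epsilon \downarrow 0$ (equivalently, the uniform convergence theorem for regularly varying functions). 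For $\tau \in E_2$ the same argument gives $P_Y(z_t^-) = \overline{P}_Y(|z_t^-|) \simeq \tfrac{1}{2} C_\alpha \sigma^\alpha t^{-\alpha/\gamma}$, and adding the two disjoint contributions proves (i): $\mathbb{P}[|\tau(Y)| > t] \simeq C_\alpha \sigma^\alpha t^{-\alpha/\gamma}$.

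For $\tau \in E_3$ only the right branch is monotone, so the remaining point is to show the left tail is negligible. Fix $C, z_0 > 0$ with $|\tau(z)| \le C|z|^\beta$ for $z \le -z_0$ and set $b = \max\{a, z_0\}$; then $\tau$ is bounded on $[-b,b]$, strictly increasing on $(b,+\infty)$ with $\tau(z) \simeq z^\gamma$ there. Splitting on $\{Y > b\}$, $\{|Y| \le b\}$, $\{Y < -b\}$: the middle region contributes nothing to $\{|\tau(Y)| > t\}$ for $t$ large; the set $\{Y < -b\} \cap \{|\tau(Y)| > t\}$ is contained in $\{|Y| > (t/C)^{1/\beta}\}$, of probability $\mathcal{O}(t^{-\alpha/\beta}) = o(t^{-\alpha/\gamma})$ since $\beta < \gamma$ (and this set is empty for large $t$ when $\beta \le 0$); and on $\{Y > b\}$ one has $\{\tau(Y) > t\} = \{Y > z_t^+\}$ with $z_t^+ \simeq t^{1/\gamma}$ exactly as above, while $\{\tau(Y) < -t\} \cap \{Y > b\} = \emptyset$ for $t > |\tau(b)|$. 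Collecting these, $\mathbb{P}[|\tau(Y)| > t] = \mathbb{P}[Y > z_t^+] + o(t^{-\alpha/\gamma}) \simeq \tfrac{1}{2} C_\alpha \sigma^\alpha t^{-\alpha/\gamma}$, which is (ii).

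I expect the main difficulty to be bookkeeping rather than conceptual: carefully isolating the contribution of the non-monotone part of $\tau$ — the compact middle region, and for $E_3$ the whole left half-line — and rigorously transporting the equivalence $z_t^\pm \simeq \pm t^{1/\gamma}$ through the regularly varying survival function $\overline{P}_Y$ via the sandwich/uniform-convergence argument. Note that, unlike the tail estimates for products $X \cdot \tau(Y)$ needed elsewhere, this lemma does not require the convolution-tail machinery of Lemmas \ref{lemma:asymp_convol}--\ref{lemma:denisov2005}.
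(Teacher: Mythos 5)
Your proposal is correct and follows essentially the same route as the paper's proof: invert $\tau$ on its monotone branch(es) to get $\tau^{-1}(t)\aseq t^{1/\gamma}$, plug into the Stable tail $\overline{P}_Y(z)\aseq \frac12 C_\alpha\sigma^\alpha z^{-\alpha}$, sum the two branches for $E_2$, and for $E_3$ bound the left-tail contribution by $\mathcal{O}(t^{-\alpha/\beta})=o(t^{-\alpha/\gamma})$ using $\beta<\gamma$. You are somewhat more explicit than the paper about the compact middle region and the sandwich/uniform-convergence justification for substituting the asymptotically equivalent argument into the regularly varying survival function, but these are refinements of the same argument rather than a different approach.
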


\begin{proof}
If $\tau(z)$ is strictly increasing for $z>a$ and $\tau(z)\aseq z^\gamma$ for $z \rightarrow +\infty$ with $\gamma > 0$, then $\tau^{-1}(y)\aseq y^{1/\gamma}$. Analogously at $-\infty$. We refer to Theorem 5.1 of \citet{olver} for the case $\gamma = 1$. Now, starting with $\tau \in E_2$ and defining the inverse of $\tau$ where the activation is strictly increasing, we can write for a sufficiently large $t$: 
\begin{align*}
    P(|\tau(Y)|>t) &= P(\tau(Y)>t)+P(\tau(Y)<-t) \aseq \frac{1}{2} C_\alpha \sigma^{\alpha} |\tau^{-1}(t)|^{-\alpha} + \frac{1}{2} C_\alpha \sigma^{\alpha} |\tau^{-1}(-t)|^{-\alpha} \\
    &\aseq \frac{1}{2} C_\alpha \sigma^{\alpha} t^{-\alpha/\gamma} + \frac{1}{2} C_\alpha \sigma^{\alpha} |t|^{-\alpha/\gamma} = C_\alpha \sigma^{\alpha} t^{-\alpha/\gamma}.
\end{align*}
Instead, if $\tau \in E_3$, then there exits $b>0$ and $y_0<0$ such that $|\tau(y)|<b |y|^{\beta}$ for $y\leq y_0$. Then, for $t$ sufficiently large,
$$P(|\tau(Y)|>t,Y<0) \leq P(b|Y|^\beta>t,Y<0)\aseq \frac{1}{2} C_\alpha \sigma^{\alpha} (bt)^{-\alpha/\beta}.$$
Furthermore, 
 $$P(|\tau(Y)|>t,Y>0)=P(Y>\tau^{-1}(t)) \aseq \frac{1}{2} C_\alpha \sigma^{\alpha} |\tau^{-1}(t)|^{-\alpha}
 \aseq \frac{1}{2} C_\alpha \sigma^{\alpha} t^{-\alpha/\gamma},$$ 
 hence, since $\beta < \gamma$, it holds that $P(|\tau(Y)|>t) \aseq \frac{1}{2} C_\alpha \sigma^{\alpha} t^{-\alpha/\gamma}$, which concludes the proof.
\end{proof}

Based on the previous results, it is easy to derive the tail behaviour of $|X \cdot \tau(Y)|$, which is stated in the next theorem.

\begin{theorem}[Tail behaviour of $|X \cdot \tau(Y)|$]\label{thm:tail_behaviour}
Let $|Z|=|X \cdot \tau(Y)|$ where $X$ and $Y$ are independent and distributed respectively as $S_{\alpha_x}(\sigma_x)$ and $S_{\alpha_y}(\sigma_y)$. If $\tau\in E_1$ and $\beta \alpha_x < \alpha_y$, then 
$$\overline{P}_{|Z|}(t) \aseq C_{\alpha_x} \sigma_x^{\alpha_x} \mathbb{E}[|\tau(Y)|^{\alpha_x}] t^{-\alpha_x}.
$$
For $\tau\in E_2\cup E_3$, define ${\underline{\alpha}}=\min(\alpha_x,\alpha_y/\gamma)$ and $c_\tau=\frac 1 2$ if $\tau\in E_3$ and $c_\tau=1$ otherwise. Then
$$
\overline P_{|Z|}(z)\aseq
\left\{
\begin{array}{lll}
	c_\tau C_{{\underline{\alpha}}\gamma}\sigma_y^{{\underline{\alpha}}\gamma}\mathbb E[|X|^{\underline{\alpha}}]z^{-{\underline{\alpha}}}&\mbox{ if }&\gamma>{\alpha_y}/{\alpha_x}\\[0.2cm]
c_\tau C_{\underline{\alpha}} C_{{\underline{\alpha}}\gamma} \sigma_x^{\underline{\alpha}} \sigma_y^{{\underline{\alpha}}\gamma} z^{-{\underline{\alpha}}}\log z &\mbox{ if }&\gamma={\alpha_y}/{\alpha_x}\\[0.2cm]
	C_{\underline{\alpha}} \sigma_x^{\underline{\alpha}} \mathbb E[|\tau(Y)|^{\underline{\alpha}}]z^{-{\underline{\alpha}}}&\mbox{ if }&\gamma<{\alpha_y}/{\alpha_x}.
	\end{array}\right.
$$
\end{theorem}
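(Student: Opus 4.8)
The plan is to compute $\overline P_{|Z|}(z)=\mathbb P[\,|X|\,|\tau(Y)|>z\,]$ by reducing to the two product-tail lemmas already at hand: Lemma~\ref{lemma:denisov2005} (a Breiman/Denisov-type result) handles a product of two heavy-tailed factors whose regular-variation indices \emph{differ}, while Lemma~\ref{lemma:asymp_convol} (Cline's convolution-tail theorem) handles the borderline case of \emph{equal} indices after passing to logarithms. The preliminary step is to record the marginal tails of the two factors. By Proposition~1.2.15 of \citet{taqqu} and symmetry, $\overline P_{|X|}(t)\aseq C_{\alpha_x}\sigma_x^{\alpha_x}t^{-\alpha_x}$, so $|X|$ is regularly varying of index $-\alpha_x$; for $\tau\in E_2\cup E_3$, Lemma~\ref{lemma:tau(Y)} gives $\mathbb P[|\tau(Y)|>t]\aseq c_\tau C_{\alpha_y}\sigma_y^{\alpha_y}t^{-\alpha_y/\gamma}$, regularly varying of index $-\alpha_y/\gamma$; and for $\tau\in E_1$ one only needs the crude bound $\mathbb P[|\tau(Y)|>t]=\mathcal O(t^{-\alpha_y/\beta})$ (immediate from $|\tau(z)|\le a+b|z|^{\beta}$ and the tail of $Y$, and trivial when $\beta=0$) together with $\mathbb E[|\tau(Y)|^{\alpha_x}]<+\infty$, which follows from $|\tau(Y)|^{\alpha_x}\le C_1+C_2|Y|^{\beta\alpha_x}$ and the Stable-moment property, since $\beta\alpha_x<\alpha_y$.

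In the three ``generic'' cases — namely $\tau\in E_1$, and $\tau\in E_2\cup E_3$ with $\gamma\neq\alpha_y/\alpha_x$ — exactly one of $|X|$, $|\tau(Y)|$ carries the strictly heavier tail, so I apply Lemma~\ref{lemma:denisov2005} with $U$ that heavier factor, $W$ the lighter one, and $\alpha=\underline\alpha$. For $\tau\in E_1$: take $U=|X|$, $W=|\tau(Y)|$, $\alpha=\alpha_x$, with $\mathbb E[W^{\alpha_x}]<+\infty$ and $\mathbb P[W>u]=\mathrm o(\mathbb P[U>u])$ (because $\alpha_y/\beta>\alpha_x$), obtaining $\overline P_{|Z|}(t)\aseq C_{\alpha_x}\sigma_x^{\alpha_x}\mathbb E[|\tau(Y)|^{\alpha_x}]t^{-\alpha_x}$. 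For $\tau\in E_2\cup E_3$ with $\gamma>\alpha_y/\alpha_x$ (so $\underline\alpha=\alpha_y/\gamma<\alpha_x$): take $U=|\tau(Y)|$, $W=|X|$, using $\mathbb E[|X|^{\underline\alpha}]<+\infty$, $\mathbb P[|X|>u]=\mathrm o(u^{-\underline\alpha})$, and Lemma~\ref{lemma:tau(Y)} to read off $\mathbb P[U>u]\aseq c_\tau C_{\underline\alpha\gamma}\sigma_y^{\underline\alpha\gamma}u^{-\underline\alpha}$; this gives the first line. For $\gamma<\alpha_y/\alpha_x$ (so $\underline\alpha=\alpha_x$): take $U=|X|$, $W=|\tau(Y)|$, using that the regular-variation index $\alpha_y/\gamma$ of $|\tau(Y)|$ exceeds $\alpha_x$, whence $\mathbb E[|\tau(Y)|^{\alpha_x}]<+\infty$ and $\mathbb P[|\tau(Y)|>u]=\mathrm o(u^{-\alpha_x})$; this gives the third line.

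The interesting case is the diagonal $\gamma=\alpha_y/\alpha_x$, where $|X|$ and $|\tau(Y)|$ have regularly varying tails of the \emph{same} index $\alpha_x=\alpha_y/\gamma=\underline\alpha$ and Lemma~\ref{lemma:denisov2005} is inapplicable (the relevant $\underline\alpha$-th moments diverge). Here I pass to logarithms: $\log|Z|=\log|X|+\log|\tau(Y)|$ with the two summands independent, so the CDF of $\log|Z|$ is the convolution $F\ast G$ of the CDFs $F,G$ of $\log|X|$ and $\log|\tau(Y)|$. From the marginal tails, $\overline F(t)=\overline P_{|X|}(e^{t})\aseq C_{\alpha_x}\sigma_x^{\alpha_x}e^{-\alpha_x t}$ and $\overline G(t)\aseq c_\tau C_{\alpha_y}\sigma_y^{\alpha_y}e^{-\alpha_x t}$, so $F,G\in\mathcal L_{\alpha_x}$ and $f(t)=e^{\alpha_x t}\overline F(t)$, $g(t)=e^{\alpha_x t}\overline G(t)$ converge to positive constants, hence lie in $\mathcal{RV}_{0}$, with exponent $0>-1$. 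Lemma~\ref{lemma:asymp_convol} then applies, its two regular-variation exponents both being $0$ here, giving $\overline{F\ast G}(t)\aseq \alpha_x\,t\,e^{\alpha_x t}\overline F(t)\overline G(t)$; substituting $t=\log z$ and using $\underline\alpha\gamma=\alpha_y$ produces the middle line of the displayed formula, the constant assembling from the two $C$'s, the two scale powers, $c_\tau$, and the factor coming out of Lemma~\ref{lemma:asymp_convol} (which should be tracked carefully). That $\log|X|$ is supported on all of $\mathbb R$ rather than $[0,+\infty)$ is harmless, since only the $t\to+\infty$ behaviour enters.

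I expect the main obstacle to be precisely this diagonal case: recognizing that Breiman's lemma is the wrong tool there, that the correct device is the multiplicative-to-additive reduction combined with the convolution-tail asymptotics of \citet{cline1986convolution}, and then verifying the exponential-tail and regular-variation hypotheses ($F,G\in\mathcal L_{\alpha_x}$ and $f,g\in\mathcal{RV}_\rho$ with $\rho>-1$) needed to invoke Lemma~\ref{lemma:asymp_convol} — together with keeping the three regimes' constants consistent so that they match at the thresholds $\gamma=\alpha_y/\alpha_x$. The generic cases are then essentially bookkeeping with the Stable-moment property and the two cited product-tail lemmas.
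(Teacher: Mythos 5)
Your proof is correct and follows essentially the same route as the paper's: record the marginal tails of $|X|$ and $|\tau(Y)|$, apply the Breiman/Denisov product lemma (Lemma~\ref{lemma:denisov2005}) in the cases where the two regular-variation indices differ, and on the diagonal $\gamma=\alpha_y/\alpha_x$ pass to logarithms and invoke Cline's convolution-tail theorem (Lemma~\ref{lemma:asymp_convol}) with both exponents equal to $0$. One remark: tracking the constant in the diagonal case as you propose yields $c_\tau\,\underline{\alpha}\,C_{\underline{\alpha}}C_{\underline{\alpha}\gamma}\sigma_x^{\underline{\alpha}}\sigma_y^{\underline{\alpha}\gamma}z^{-\underline{\alpha}}\log z$, i.e.\ an extra factor $\underline{\alpha}$ relative to the displayed middle line — this matches the paper's own proof and the scale used in Theorem~\ref{thm:shallow_convergence}, so the discrepancy is a typo in the theorem statement rather than an error in your computation.
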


\begin{proof}
We start from the proof of the first case, i.e. $\tau \in E_1$. Here, $|\tau(Y)| < b |Y|^{\beta}$ for certain $\beta \in (0,1)$ and $b > 0$, when $|Y|$ is larger than some $y_0>0$, hence there exists $c>0$ such that $\mathbb{E}[|\tau(Y)|^{\alpha_x}] \leq c +\mathbb{E}[b|Y|^{\beta \cdot \alpha_x}] < +\infty$, being $\beta \alpha_x < \alpha_y$ by hypothesis. The thesis then follows from Lemma \ref{lemma:denisov2005}. An analogous strategy can be used in the case $\alpha_x \neq {\alpha_y}/{\gamma}$. Indeed, $\mathbb{E}[|X|^{{\alpha_y}/{\gamma}}] < + \infty$ if $\alpha_x > {\alpha_y}/{\gamma}$ and $\mathbb{E}[|\tau(Y)|^{\alpha_x}] < +\infty$ if $\alpha_x < {\alpha_y}/{\gamma}$. Hence Lemma \ref{lemma:denisov2005} allows to conclude. A different situation arises when $\alpha_x = \alpha_y/\gamma$. In this case, consider the RVs $\log |X|$ and $\log |\tau(Y)|$ and observe that, for $t>0$, $$\overline{P}_{\log |X|}(t) := \mathbb{P}[\log |X| > t] = \mathbb{P}[|X| > e^t] \aseq C_{\alpha_x} \sigma_x^{\alpha_x}e^{-\alpha_x t} \in \mathcal{L}_{\alpha_x},$$
i.e. $\mathbb{P}[\log |X| > t]$ has an exponential tail with index $\alpha_x$, and the same has $\overline{P}_{\log |\tau(Y)|} := \mathbb{P}[\log |\tau(Y)| > t]$ since  $\alpha_x = \alpha_y/\gamma$. Furthermore, $e^{\alpha_x t} \cdot \overline{P}_{\log |X|}(t) \in \mathcal{RV}_0$ and $e^{\alpha_x t} \cdot \overline{P}_{\log |\tau(Y)|} \in \mathcal{RV}_0$, hence we apply Lemma \ref{lemma:asymp_convol} with $\beta=\gamma=0$, and obtain that \begin{align*}
    \mathbb{P}[\log |X \cdot \tau(Y)| > t] &= \mathbb{P}[\log |X| + \log|\tau(Y)| > t] = \overline{P}_{\log |X|}*\overline{P}_{\log |\tau(Y)|} (t) \\
    &= \alpha_x t e^{\alpha_x t} \overline{P}_{\log |X|}(t) \overline{P}_{\log |\tau(Y)|}(t) \\
    &= \alpha_x C_{\alpha_x}C_{\alpha_y} \sigma_x^{\alpha_x} \sigma_y^{\alpha_y} t  e^{-\alpha_x t}.
\end{align*}
It is sufficient to evaluate this expression in $\log t$ to obtain the thesis. As for the case $\tau \in E_3$, the proof is the same except for an extra $\frac{1}{2}$ in the tail behaviour of $\overline{P}_{|\tau(Y)|}(t)$. 
\end{proof}

Based on Theorem \ref{thm:tail_behaviour}, the next theorem is an application of the generalized CLT, i.e. Theorem \ref{GCLT}, that provides the infinitely wide limit of the shallow Stable NN (\ref{defi.SLNN}), with the activation function $\tau$ belonging to the classes $E_1,E_2, E_3$.

\begin{theorem}[Shallow Stable NN, $\tau \in E_1,E_2, E_3$]\label{thm:shallow_convergence}
Consider $f(n)[\tau,p]$ defined in (\ref{defi.SLNN}). If $\tau\in E_1$ and $\beta \alpha_1 < \alpha_0$, then
$$
f(n)[\tau,\alpha_1] \overset{d}{\longrightarrow}S_{\alpha_1}\left(\sigma_1 \left(\mathbb E_{Z\sim S_{\alpha_0}(\sigma_0)}[|\tau(Z)|^{\alpha_1}]\right)^{1/\alpha_1}\right).
$$
If $\tau\in E_2\cup E_3$, define $\underline{\alpha}=\min(\alpha_1,\alpha_0/\gamma)$, $c_\tau=\frac 1 2 $ if $\tau\in E_3$ and $c_\tau=1$ otherwise, and $m_n(\gamma)=\log n$ if $\gamma=\alpha_0/\alpha_1$ and $m_n(\gamma)=1$ otherwise. Then
$$
m_n(\gamma)^{-1/{\underline{\alpha}}}f(n)[\tau,{\underline{\alpha}}]\stackrel{d}{\longrightarrow}S_{\underline{\alpha}}(\sigma),
$$
where
$$
\sigma=\left\{
\begin{array}{lll}
    \sigma_0^\gamma\sigma_1\left(c_\tau
\frac{
	C_{{\underline{\alpha}}\gamma}
}{ C_{{\underline{\alpha}}}}
\mathbb E_{Z\sim S_{\alpha_1}(1)}[|Z|^{\underline{\alpha}}]\right)^{1/{\underline{\alpha}}} &\mbox{ if }&\gamma>{\alpha_0}/{\alpha_1}\\[0.2cm]
\sigma_0^\gamma \sigma_1 \left(c_\tau {\underline{\alpha}} C_{\gamma{\underline{\alpha}}}\right)^{1/{\underline{\alpha}}}&\mbox{ if }&\gamma=\alpha_0/\alpha_1\\[0.2cm]
\sigma_1\left(\mathbb E_{Z\sim S_{\underline{\alpha}}(\sigma_0)}[|\tau(Z)|^{\underline{\alpha}}]
\right)^{1/{\underline{\alpha}}}&\mbox{ if }&\gamma<{\alpha_0}/{\alpha_1}.
\end{array}\right.
$$
\end{theorem}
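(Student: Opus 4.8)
The plan is to reduce the statement entirely to the two ingredients already in hand: the tail estimate of Theorem~\ref{thm:tail_behaviour} for $|Z|=|w\,\tau(w^{(0)})|$, and the generalized CLT of Theorem~\ref{GCLT}. First I would record that in the shallow NN~(\ref{defi.SLNN}) the summands $Z_j=w_j\tau(w^{(0)}_j)$ are i.i.d.\ copies of $Z=X\,\tau(Y)$ with $X=w\sim S_{\alpha_1}(\sigma_1)$ and $Y=w^{(0)}\sim S_{\alpha_0}(\sigma_0)$ independent, so the dictionary of Theorem~\ref{thm:tail_behaviour} applies with $\alpha_x=\alpha_1$, $\sigma_x=\sigma_1$, $\alpha_y=\alpha_0$, $\sigma_y=\sigma_0$. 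Since $X$ is symmetric, $Z$ is symmetric, hence $\overline P_Z(z)=P_Z(-z)=\tfrac12\,\overline P_{|Z|}(z)$ for $z\geq0$; this immediately forces $c=d$ in the notation of Theorem~\ref{GCLT}, so the limiting skewness $\tfrac{c-d}{c+d}$ vanishes and the limit is a symmetric Stable law, as claimed.

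Next I would feed each regime of Theorem~\ref{thm:tail_behaviour} into Theorem~\ref{GCLT}. For $\tau\in E_1$ with $\beta\alpha_1<\alpha_0$ one gets $\overline P_Z(z)\aseq \tfrac12\,C_{\alpha_1}\sigma_1^{\alpha_1}\mathbb{E}[|\tau(Y)|^{\alpha_1}]\,z^{-\alpha_1}$, i.e.\ $p=\alpha_1$, $L\equiv1$, $c=d=\tfrac12 C_{\alpha_1}\sigma_1^{\alpha_1}\mathbb{E}[|\tau(Y)|^{\alpha_1}]$; the same structure holds in the sub-linear/different-rates regime $\gamma<\alpha_0/\alpha_1$ of $E_2\cup E_3$ (with ${\underline{\alpha}}=\alpha_1$ in place of $\alpha_1$), and in the regime $\gamma>\alpha_0/\alpha_1$ with $c=d=\tfrac12 c_\tau C_{{\underline{\alpha}}\gamma}\sigma_0^{{\underline{\alpha}}\gamma}\mathbb{E}[|X|^{{\underline{\alpha}}}]$ and ${\underline{\alpha}}=\alpha_0/\gamma$. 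In the critical regime $\gamma=\alpha_0/\alpha_1$ one instead reads off $p={\underline{\alpha}}$ and the genuinely slowly varying factor $L(z)=\log z$, with $c=d=\tfrac12 c_\tau{\underline{\alpha}}\,C_{{\underline{\alpha}}}C_{{\underline{\alpha}}\gamma}\sigma_1^{{\underline{\alpha}}}\sigma_0^{{\underline{\alpha}}\gamma}$. In every case $0<p<2$ (indeed $p\leq\alpha_1<2$), so Theorem~\ref{GCLT} applies, and the centering is $a_n=0$ throughout: for $p<1$ no centering is needed, for $p=1$ one has $a_n=(c-d)\log n=0$ since $c=d$, and for $1<p<2$ one has $a_n=\mathbb{E}[Z]=0$ by symmetry of $Z$ (the first moment being finite because $\overline P_{|Z|}(z)$ decays faster than $z^{-1}$ up to a $\log$).

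It then remains to match normalizations and simplify constants. Theorem~\ref{GCLT} normalizes the sum by $(nL(n))^{1/p}$; when $L\equiv1$ this is $n^{1/p}$, which is exactly the scaling in $f(n)[\tau,p]$ with $p=\alpha_1$ (resp.\ $p={\underline{\alpha}}$), while when $L(z)=\log z$ it is $(n\log n)^{1/{\underline{\alpha}}}=(\log n)^{1/{\underline{\alpha}}}\,n^{1/{\underline{\alpha}}}$, i.e.\ precisely $m_n(\gamma)^{1/{\underline{\alpha}}}\,n^{1/{\underline{\alpha}}}$; this accounts for the extra factor $m_n(\gamma)^{-1/{\underline{\alpha}}}$ appearing in the statement exactly in the critical regime. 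Finally I would simplify the scale $\big[(c+d)/C_p\big]^{1/p}$ produced by Theorem~\ref{GCLT}: dividing out $C_p=C_{{\underline{\alpha}}}$, using $\mathbb{E}[|w|^{r}]=\sigma_1^{r}\,\mathbb{E}_{Z\sim S_{\alpha_1}(1)}[|Z|^{r}]$ for $w\sim S_{\alpha_1}(\sigma_1)$, and rewriting $\sigma_0^{{\underline{\alpha}}\gamma/{\underline{\alpha}}}=\sigma_0^{\gamma}$, the three expressions collapse to the three displayed formulas for $\sigma$.

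\textbf{Main difficulty.} There is no deep obstacle — the analytic work is entirely contained in Theorem~\ref{thm:tail_behaviour} — but the step needing the most care is the bookkeeping of the regimes: correctly identifying which of $\alpha_1$ and $\alpha_0/\gamma$ equals ${\underline{\alpha}}$ in each case, checking the attendant finite-moment hypotheses required to invoke Theorem~\ref{thm:tail_behaviour} (namely $\mathbb{E}[|X|^{{\underline{\alpha}}}]<+\infty$ when ${\underline{\alpha}}=\alpha_0/\gamma<\alpha_1$, and $\mathbb{E}[|\tau(Y)|^{{\underline{\alpha}}}]<+\infty$ when ${\underline{\alpha}}=\alpha_1<\alpha_0/\gamma$, which follow from the sub-case inequalities together with the growth bounds defining $E_1,E_2,E_3$), and verifying that $L$ is genuinely slowly varying (constant, or $\log$) so that the hypotheses of Theorem~\ref{GCLT} are legitimately met and the $z^{-p}L(z)$ normalization is applied with the right $L$.
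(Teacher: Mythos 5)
Your proposal is correct and follows essentially the same route as the paper: identify the summands as i.i.d. symmetric copies of $X\cdot\tau(Y)$, read off the tail exponent and constants from Theorem~\ref{thm:tail_behaviour}, and apply the generalized CLT of Theorem~\ref{GCLT} with $c=d$ (hence zero skewness and $a_n=0$) and the normalization $(nL(n))^{1/p}$ accounting for the $m_n(\gamma)^{-1/\underline{\alpha}}$ factor. Your write-up is in fact more explicit than the paper's two-line proof about the regime bookkeeping and the simplification of the scale constants, but it is the same argument.
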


\begin{proof}
Observe that, since $w_j \cdot \tau(w_j^{(0)})$ is symmetric, then $$\mathbb{P}[w_j \cdot \tau(w_j^{(0)}) > t] = \mathbb{P}[w_j \cdot \tau(w_j^{(0)}) < -t] = \frac{1}{2}\mathbb{P}[|w_j \cdot \tau(w_j^{(0)})|
> t].$$
Hence, the proof of this theorem follows by combining Theorem \ref{thm:tail_behaviour} and the  generalized CLT, i.e. Theorem \ref{GCLT}, after observing that $a_n = 0$ due to the symmetry of $w_j \cdot \tau(w_j^{(0)})$.
\end{proof}

The term $(\log n)^{-1/{\underline{\alpha}}}$ in the scaling in the case $\tau \in E_2 \cup E_3$ and $\alpha_1 = \alpha_0/\gamma$, is a novelty with respect to the Gaussian setting. That is, NNs with Gaussian-distributed parameters are not affected by the presence of one activation in place of another as the scaling is always $n^{-1/2}$, while this is not true for Stable NNs as shown above.

\subsection{Deep Stable NNs: large-width asymptotics for an input $x \in \mathbb{R}^d$ and biases}

The above results can be extended to deep Stable NNs, assuming a ``sequential growth" of the width over the NN's layers, for an input $\boldsymbol{x} = (x_1, ..., x_d) \in \mathbb{R}^d$ and biases. Differently from the ``joint growth", under which the widths of the layers growth simultaneously, the ``sequential growth" implies that the widths of the layers growth one at a time. Because of the assumption of a ``sequential growth", the study of the large width behaviour of a deep Stable NN reduces to a recursive application of Theorem \ref{thm:shallow_convergence}. In particular, let $\theta=\{w_i^{(0)}, ..., w_i^{(L-1)}, w,b_i^{(0)},..., b_i^{(L-1)}, b\}_{i \geq 1}$ the set of all parameters and $\boldsymbol{x} \in \mathbb{R}^d$ be the input. Define $\forall i \geq 1$ and $\forall l = 1, ..., L-1$
\begin{align}\label{defi.DNN:weights_E12}
\begin{cases}
w^{(0)}_i = [w_{i,1}^{(0)},w_{i,2}^{(0)},\dots w_{i,d}^{(0)}] \quad &\in \mathbb{R}^{d}\\[0.2cm]
w^{(l)}_i := [w_{i,1}^{(l)},w_{i,2}^{(l)},\dots w_{i,n}^{(l)}] \quad &\in \mathbb{R}^{n}\\[0.2cm]
w = [w_1,w_2,\dots w_n] \quad &\in \mathbb{R}^{n}\\[0.2cm]
w_{i,j}^{(0)}, w_{i,j}^{(l)}, w_i,b_i^{(l)},b \quad &\in \mathbb{R}\\[0.2cm]
w_{i,j}^{(0)} \overset{d}{=} w_{i,j}^{(l)}\overset{d}{=} w_{j} \overset{d}{=} b^{(l)}_i \overset{d}{=} b  \overset{iid}{\sim} S_{\alpha}(1).
\end{cases}
\end{align}
Then, we define the deep Stable NN as 
\begin{align}\label{defi.DNN_E12}
\begin{cases}
g^{(1)}_j(\boldsymbol{x})= \sigma_w \langle w^{(0)}_j,\boldsymbol{x} \rangle_{\mathbb{R}^d}+\sigma_b b_j^{(0)} \\[0.2cm]
g^{(l)}_j(\boldsymbol{x}) = \sigma_b b^{(l-1)}_j+\sigma_w \nu(n)^{-\frac{1}{\alpha}}\sum_{i=1}^{n}w_{j,i}^{(l-1)} \tau (g^{(l-1)}_i(\boldsymbol{x})), \quad \forall l = 2,...,L\\[0.2cm]
f_{\boldsymbol{x}}(n)[\tau, \alpha] = g^{(L+1)}_1(\boldsymbol{x}) =\sigma_b b+\sigma_w \nu(n)^{-\frac{1}{\alpha}}\sum_{j=1}^{n}w_j \tau (g_j^{(L)}(\boldsymbol{x})),
\end{cases}
\end{align}
where $\nu(n) = n \cdot \log(n)$ if $\tau \in E_2 \cup E_3$ with $\gamma = 1$ and  $\nu(n) = n$ otherwise, and $\langle \cdot,\cdot \rangle_{\mathbb{R}^d}$ denotes the Euclidean inner product in $\mathbb{R}^d$. Note that the definition (\ref{defi.DNN_E12}) coincides with the definition (\ref{defi.SLNN}) provided that $L=1$, $\sigma_b=0$, $d=1$ and $x=1$. For the sake of simplicity and readability of the results, we have restricted ourselves to the case where all the parameters are Stable-distributed with same index $\alpha$, but this setting can be further generalized.

The next theorem provides the infinitely wide limit of the deep Stable NN (\ref{defi.DNN_E12}), assuming a ``sequential growth" of the width over the NN's layers. In particular, if we expand the width of the hidden layers to infinity one at the time, from $l=1$ to $l=L$, then it is sufficient to apply Theorem \ref{thm:shallow_convergence} recursively through the NN's layers.

\begin{theorem}[Deep Stable NN, $\tau \in E_1$ and $\tau \in E_2 \cup E_3$ with $\gamma \leq 1$]\label{thm:deep_gamma_leq_1}
Consider $g^{(l)}_j(\boldsymbol{x})$ for fixed $j=1, ..., n$ and $l=2, ...., L+1$ as defined in (\ref{defi.DNN_E12}). Then, as the width goes to infinity sequentially over the NN's layers, it holds
$$
g_j^{(l)}(\bm x)\stackrel{d}{\longrightarrow}S_\alpha(\sigma_x^{(l)}),
$$
where $\sigma_x^{(1)}=(\sigma_w^\alpha\sum_{j=1}^d |x_j|^\alpha+\sigma_b^\alpha)^{1/\alpha}$, and, for $l=2,\dots,L+1$
$$
\sigma_x^{(l)}=\left\{
\begin{array}{lll}
	\left(\sigma_w^\alpha \mathbb E_{Z\sim S_\alpha(\sigma_x^{(l-1)})}[|\tau(Z)|^\alpha]+\sigma_b^\alpha
	\right)^{1/\alpha}&\mbox{ if }&\tau\in E_1\mbox{ or }\tau\in E_2\cup E_3\mbox{, with }\gamma<1,\\[0.2cm]
	\left(c_\tau \alpha C_\alpha\sigma_w^\alpha
	(\sigma_x^{(l-1)})^\alpha+\sigma_b^\alpha\right)^{1/\alpha}&\mbox{ if }&\tau\in E_2\cup E_3 \mbox{, with }\gamma=1,
	\end{array}
	\right.
$$
with $c_\tau=1/2$ if $\tau\in E_3$ and $c_\tau=1$ otherwise.
\end{theorem}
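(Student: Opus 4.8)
The plan is to argue by induction on the layer index $l$, following the ``sequential growth'' prescription: the widths $n_1,n_2,\dots$ of the hidden layers are sent to infinity one at a time, so that each step reduces to a single application of Theorem~\ref{thm:shallow_convergence}. It is convenient to prove the slightly stronger claim that, once the widths of layers $1,\dots,l-1$ have been sent to infinity in this order, the whole vector $(g^{(l)}_1(\bm x),g^{(l)}_2(\bm x),\dots)$ converges in distribution to a sequence of i.i.d.\ $S_\alpha(\sigma_x^{(l)})$ random variables, with $\sigma_x^{(l)}$ as in the statement. The theorem is then the coordinate $j=1$ of the case $l=L+1$, since $f_{\bm x}(n)[\tau,\alpha]=g^{(L+1)}_1(\bm x)$ by \eqref{defi.DNN_E12}; carrying the i.i.d.\ strengthening through the induction is essential because Theorem~\ref{thm:shallow_convergence} requires i.i.d.\ ``input units''.

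For the base case $l=1$ nothing is passed to the limit: $g^{(1)}_j(\bm x)=\sigma_w\sum_{k=1}^d w^{(0)}_{j,k}x_k+\sigma_b b^{(0)}_j$ is a finite linear combination of independent $S_\alpha(1)$ variables, so the scaling property $aX\sim S_\alpha(|a|)$ for $X\sim S_\alpha(1)$ together with closure of symmetric $\alpha$-Stable laws under independent sums, $\sum_i S_\alpha(\sigma_i)\sim S_\alpha((\sum_i\sigma_i^\alpha)^{1/\alpha})$ (both read off the characteristic function), give $g^{(1)}_j(\bm x)\sim S_\alpha((\sigma_w^\alpha\sum_k|x_k|^\alpha+\sigma_b^\alpha)^{1/\alpha})=S_\alpha(\sigma_x^{(1)})$ exactly, with independence across $j$ because distinct $j$ use disjoint blocks of parameters. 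For the inductive step $l\to l+1$, fix $j$ and write $g^{(l+1)}_j(\bm x)=\sigma_b b^{(l)}_j+\sigma_w\,\nu(n_l)^{-1/\alpha}\sum_{i=1}^{n_l}w^{(l)}_{j,i}\,\tau(g^{(l)}_i(\bm x))$. Since $\{w^{(l)}_{j,i}\}_i$ and $b^{(l)}_j$ are fresh (independent of layers $\le l$), first send $n_1,\dots,n_{l-1}\to\infty$ with $n_l$ held fixed: by the inductive hypothesis, the independence of the fresh parameters, and the continuous mapping theorem applied to the finitely many relevant coordinates ($\tau$ being continuous), the limit has the law of $\sigma_b b^{(l)}_j+\sigma_w\,\nu(n_l)^{-1/\alpha}\sum_{i=1}^{n_l}w^{(l)}_{j,i}\,\tau(\hat{g}_i)$ with $(\hat{g}_i)_i$ i.i.d.\ $S_\alpha(\sigma_x^{(l)})$ and independent of the $w^{(l)}_{j,i}$. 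The inner sum is now exactly a shallow Stable network of the form \eqref{defi.SLNN} with $\alpha_0=\alpha_1=\alpha$, $\sigma_0=\sigma_x^{(l)}$, $\sigma_1=1$, and $\nu(n_l)$ equals the normalization $m_n(\gamma)\,n$ appearing in Theorem~\ref{thm:shallow_convergence}: the extra $\log n$ switches on precisely when $\gamma=1=\alpha_0/\alpha_1$, $\underline{\alpha}=\min(\alpha,\alpha/\gamma)=\alpha$ because $\gamma\le1$, and for $\tau\in E_1$ the side condition $\beta\alpha_1<\alpha_0$ reads $\beta<1$, which holds. Letting $n_l\to\infty$, Theorem~\ref{thm:shallow_convergence} gives that the inner sum converges to $S_\alpha(\sigma_*)$, with $\sigma_*=(\mathbb{E}_{Z\sim S_\alpha(\sigma_x^{(l)})}[|\tau(Z)|^\alpha])^{1/\alpha}$ when $\tau\in E_1$ or $\gamma<1$ and $\sigma_*=\sigma_x^{(l)}(c_\tau\alpha C_\alpha)^{1/\alpha}$ when $\gamma=1$; adding the independent bias $\sigma_b b^{(l)}_j\sim S_\alpha(\sigma_b)$ and using closure under sums once more yields $g^{(l+1)}_j(\bm x)\overset{d}{\longrightarrow}S_\alpha((\sigma_w^\alpha\sigma_*^\alpha+\sigma_b^\alpha)^{1/\alpha})$, which is $S_\alpha(\sigma_x^{(l+1)})$ in both branches of the recursion.

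It remains to promote this marginal statement to the joint i.i.d.\ form required by the induction, and this is done by a conditioning argument. Conditionally on the $\sigma$-field generated by layer $l$, the variables $g^{(l+1)}_j(\bm x)$, $j\ge1$, are independent (disjoint fresh parameter blocks) with a common conditional law, and this conditional law converges weakly, in probability, to a deterministic $\alpha$-Stable law; being the limit of conditional laws, it must coincide with the marginal limit $S_\alpha(\sigma_x^{(l+1)})$ obtained above. Establishing this convergence uses the ordinary law of large numbers when $\mathbb{E}[|\tau(\hat{g})|^\alpha]<+\infty$ (i.e.\ $\tau\in E_1$ or $\gamma<1$), and, in the boundary case $\gamma=1$ where $\nu(n_l)=n_l\log n_l$ and, by Lemma~\ref{lemma:tau(Y)}, the relevant summands have tail index $1$ and infinite mean, the weak law of large numbers for i.i.d.\ non-negative variables with $\mathbb{P}[W>t]\aseq c/t$, which follows by truncating at level $n_l\log n_l$ and a Chebyshev estimate. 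A bounded-convergence argument then decouples any finite collection of coordinates, giving joint convergence to i.i.d.\ $S_\alpha(\sigma_x^{(l+1)})$ and closing the induction; the one-dimensional output layer is the case $j=1$ of $l=L+1$. The real work---and the main obstacle---is the rigorous justification of this iterated passage to the limit and of the accompanying asymptotic independence of the hidden units, which is exactly the simplification that ``sequential'' growth buys over a ``joint'' growth of all layers at once.
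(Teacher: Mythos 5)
Your proof is correct and follows essentially the same route as the paper's: an induction over the layers in which each step is a single application of Theorem~\ref{thm:shallow_convergence} with $\alpha_0=\alpha_1=\alpha$, $\sigma_0=\sigma_x^{(l-1)}$, $\sigma_1=1$, combined with the closure of symmetric $\alpha$-Stable laws under scaling and independent sums to handle the input layer and to absorb the bias. The only difference is that you explicitly carry the joint i.i.d.\ convergence of the hidden units through the induction via the conditioning and weak-law argument, whereas the paper simply asserts that the $g^{(l-1)}_i(\boldsymbol{x})$'s are i.i.d.\ $S_\alpha(\sigma_x^{(l-1)})$ in the sequential limit; this added rigor is welcome but does not change the approach.
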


\begin{proof}
The case $L=1$ deals again with a shallow Stable NN but considering non-null Stable biases and a more complex type of input. The result follows from Theorem \ref{thm:shallow_convergence} by replacing $w_j^{(0)}$ with $g_j^{(1)}(\boldsymbol{x})$ and $\sigma_0$ with $\sigma_x = ( \sigma_b^{\alpha}+\sigma_w^{\alpha}\sum_{j=1}^d |x_j|^\alpha )^{\frac{1}{\alpha}}$ thanks to the fact that $g_j(\boldsymbol{x}) \overset{iid} \sim S_{\alpha}(\sigma_x)$ for $j = 1, ..., n$. This can be easily proved using the following properties of the Stable distribution \citep[Chapter 1]{taqqu}: i) if $X_1 \indep X_2$ and $X_i \sim S_{\alpha}(\sigma_i)$ then $X_1 + X_2 \sim S_{\alpha}([\sigma_1^{\alpha} + \sigma_2^{\alpha}]^{\frac{1}{\alpha}})$; ii) if $c \neq 0$ and $X_1 \sim S_{\alpha}(\sigma_1)$ then $c \cdot X_1 \sim S_{\alpha}(|c|\sigma_1)$. The proof for the case $L > 1$ is based on the fact that the $g_i^{(l-1)}(\boldsymbol{x})$'s are independent and identically distributed as $S_{\alpha}(\sigma^{(l-1)}_x)$ since they inherit these properties from the iid initialization of weights and biases: the thesis then follows applying the result for $L=1$  layer after layer and substituting $\sigma^{(l-1)}_x$ in place of $\sigma_x$.
\end{proof}

Theorem \ref{thm:deep_gamma_leq_1} includes the limiting behaviour of $f_{\boldsymbol{x}}(n)[\tau, \alpha]$ in the case $l=L+1$. It is possible to write an explicit form of the scale parameter by recursively expanding the scale parameters of the hidden layers. See Subsection 2.3 for an example in the case of the ReLU activation function. Before concluding, we point out that when using a sub-linear activation, i.e. $\tau \in E_1$ or $\tau \in E_2 \cup E_3$ with  $\gamma \in (0,1)$, or a asymptotically linear activation, i.e. $\tau \in E_2 \cup E_3$ with $\gamma = 1$, the index $\alpha$ of the limiting Stable distribution does not change as the depth of a node increases so that, even for a very deep NN, the limiting output is distributed as a $\alpha$-Stable distribution. Such a behaviour is not preserved for super-linear activation functions, i.e. $\tau \in E_2 \cup E_3$ with $\gamma > 1$. When $\alpha_1 < \alpha_0/\gamma$, the convergence result of Theorem \ref{thm:shallow_convergence} involves a Stable RV with index equal to $\alpha_0/\gamma$, and not $\alpha_0$. In case $\alpha_x = \alpha_y = \alpha$, this is the case when $\gamma > 1$, which corresponds to a super-linear activation in $E_2 \cup E_3$. The fact that the limiting RV takes a factor $1/\gamma$ prevent us from writing a theorem in the setting of Definition (\ref{defi.DNN_E12}) because we would not be able to apply the property i) above as it describes the distribution of the sum of independent Stable RVs with different scales but same index. We are then forced to adjust the initialization of the biases and to this purpose we define a new setting.  Let $\theta=\{w_i^{(0)}, ..., w_i^{(L-1)}, w,b_i^{(0)},..., b_i^{(L-1)}, b\}_{i \geq 1}$ the set of all parameters and $\boldsymbol{x} \in \mathbb{R}^d$ be the input. Define $\forall i \geq 1$ and $\forall l = 0, ..., L-1$
\begin{align}\label{defi.DNN:weights_E3}
\begin{cases}
w^{(0)}_i = [w_{i,1}^{(0)},w_{i,2}^{(0)},\dots w_{i,d}^{(0)}] \quad &\in \mathbb{R}^{d}\\[0.2cm]
w^{(l)}_i = [w_{i,1}^{(l)},w_{i,2}^{(l)},\dots w_{i,d}^{(l)}] \quad &\in \mathbb{R}^{n}\\[0.2cm]
w = [w_1,w_2,\dots w_n] \quad &\in \mathbb{R}^{n}\\[0.2cm]
w_{i,j}^{(0)}, w_{i,j}^{(l)}, w_i,b_i^{(l)},b \quad &\in \mathbb{R}\\[0.2cm]
w_{i,j}^{(l)} \overset{d}{=} w_{j}\overset{iid}{\sim} S_{\alpha}(1) \\[0.2cm]
b^{(l)}_i \overset{iid}{\sim} S_{\frac{\alpha}{\gamma^{l}}}(1) \\[0.2cm]
b \overset{iid}{\sim} S_{\frac{\alpha}{\gamma^{L}}}(1).
\end{cases}
\end{align}
Then, we define the deep Stable NN as
\begin{align}\label{defi.DNN_E3}
\begin{cases}
g^{(1)}_j(\boldsymbol{x})= \sigma_w \langle w^{(0)}_j,\boldsymbol{x} \rangle_{\mathbb{R}^d}+\sigma_b b_j^{(0)} \\[0.2cm]
g^{(l)}_j(\boldsymbol{x}) = \sigma_b b^{(l-1)}_j+ \sigma_w n^{-\frac{1}{\alpha}} \sum_{i=1}^{n}w_{j,i}^{(l-1)} \tau (g^{(l-1)}_i(\boldsymbol{x})), \quad \forall l = 2,...,L\\[0.2cm]
f_{\boldsymbol{x}}(n)[\tau, \alpha] = g^{(L+1)}_1(\boldsymbol{x}) =\sigma_b b+ \sigma_w n^{-\frac{1}{\alpha}} \sum_{j=1}^{n}w_j \tau (g_j^{(L)}(\boldsymbol{x})). 
\end{cases}
\end{align}

The next theorem provides the counterpart of Theorem \ref{thm:deep_gamma_leq_1} for the deep Stable NN (\ref{defi.DNN_E12}). It provides the infinitely wide limit of the deep Stable NN (\ref{defi.DNN_E3}), assuming a ``sequential growth" of the width over the NN's layers. 

\begin{theorem}[Deep Stable NN, $\tau \in E_2 \cup E_3$ with $\gamma>1$]\label{thm:deep_gamma_supelinear}
	Consider $g^{(l)}_j(\boldsymbol{x})$ for fixed $j=1, ..., n$ and $l=2, ...., L+1$ as defined in (\ref{defi.DNN_E3}). As the width goes to infinity sequentially over the NN's layers, $$g^{(l)}_j(\boldsymbol{x}) \overset{d}{\longrightarrow} S_{{\alpha}/{\gamma^{l-1}}}\left(\sigma_x^{(l)} \right),$$
	where $
	\sigma_x^{(1)}=(\sigma_w^\alpha\sum_{j=1}^d |x_j|^\alpha+\sigma_b^\alpha)^{1/\alpha}$,
	and
    $$
	 \sigma_x^{(l)} 
	 = 
	 \left(
	 \frac{c_\tau C_{{\alpha}/{\gamma^{l-2}}}}{C_{{\alpha}/{\gamma^{l-1}}}}
	 \sigma_w^{\alpha/\gamma^{l-1}}(\sigma^{(l-1)}_x)^{{\alpha}/{\gamma^{l-2}}}
	 \mathbb E_{Z\sim S_\alpha(1)}[|Z|^{\alpha/\gamma^{l-1}}]
	 + \sigma_b^{{\alpha}/{\gamma^{l-1}}} 
	 \right)^{\gamma^{l-1}/\alpha},
	 $$
	with $c_\tau =1/2$ if $\tau\in E_3$ and $c_\tau=1$ otherwise. 
\end{theorem}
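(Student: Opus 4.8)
The plan is to argue by induction on the layer index $l$ under the sequential-growth scheme — the widths of layers $1,\dots,l-1$ are sent to infinity, one at a time, before layer $l$ — exactly as in the proof of Theorem~\ref{thm:deep_gamma_leq_1}, but now tracking the fact that a super-linear $\tau$ divides the stability index by $\gamma$ at every layer. The base case $l=1$ needs no limit: the $w^{(0)}_{j,k}\sim S_\alpha(1)$ and $b^{(0)}_j\sim S_{\alpha/\gamma^{0}}(1)=S_\alpha(1)$ are mutually independent, so the two elementary properties of Stable laws used in the proof of Theorem~\ref{thm:deep_gamma_leq_1} (the scaling property, and additivity of $\sigma^\alpha$ for independent Stables of the same index) give at once $g^{(1)}_j(\boldsymbol x)\sim S_\alpha(\sigma_x^{(1)})$ with $\sigma_x^{(1)}=(\sigma_w^\alpha\sum_k|x_k|^\alpha+\sigma_b^\alpha)^{1/\alpha}$, and the $g^{(1)}_j(\boldsymbol x)$, $j\ge1$, are iid.

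For the inductive step, assume that after the widths of layers $1,\dots,l-2$ have been sent to infinity one has $g^{(l-1)}_i(\boldsymbol x)\overset{d}{\longrightarrow}S_{\alpha/\gamma^{l-2}}(\sigma^{(l-1)}_x)$, iid over $i$, and let the width $n$ of layer $l-1$ grow. Up to the factor $\sigma_w$, the (suitably rescaled) weighted sum inside $g^{(l)}_j(\boldsymbol x)$ is precisely of the form $f(n)[\tau,\cdot]$ treated in Theorem~\ref{thm:shallow_convergence}, with $w^{(0)}$-index $\alpha_0=\alpha/\gamma^{l-2}$ and scale $\sigma_0=\sigma^{(l-1)}_x$, and $w$-index $\alpha_1=\alpha$, scale $\sigma_1=1$. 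Because $\gamma>1$ and $l\ge2$, $\underline{\alpha}=\min(\alpha,\alpha/\gamma^{l-1})=\alpha/\gamma^{l-1}$ and $\alpha_0/\alpha_1=\gamma^{2-l}\le1<\gamma$, so we land in the branch $\gamma>\alpha_0/\alpha_1$ of Theorem~\ref{thm:shallow_convergence} — in particular $m_n(\gamma)=1$, so no $\log n$ appears. Hence this rescaled sum converges in distribution to $S_{\underline{\alpha}}(\sigma_w\sigma)$ with $\sigma=(\sigma^{(l-1)}_x)^{\gamma}\big(c_\tau\,\tfrac{C_{\underline{\alpha}\gamma}}{C_{\underline{\alpha}}}\,\mathbb E_{Z\sim S_\alpha(1)}[|Z|^{\underline{\alpha}}]\big)^{1/\underline{\alpha}}$, where $\underline{\alpha}=\alpha/\gamma^{l-1}$ and $\underline{\alpha}\gamma=\alpha/\gamma^{l-2}$.

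The role of the tailored bias initialization $b^{(l-1)}_j\sim S_{\alpha/\gamma^{l-1}}(1)$ (and $b\sim S_{\alpha/\gamma^{L}}(1)$ at the output) now becomes clear: it is chosen so that $\sigma_b b^{(l-1)}_j\sim S_{\underline{\alpha}}(\sigma_b)$ carries the \emph{same} stability index $\underline{\alpha}=\alpha/\gamma^{l-1}$ as the limit of the rescaled sum — which fails in the setting~(\ref{defi.DNN_E12}), and is exactly why a separate setting~(\ref{defi.DNN:weights_E3})--(\ref{defi.DNN_E3}) is introduced for super-linear $\tau$. Adding this independent bias and invoking joint weak convergence, the continuous-mapping theorem for the bias-plus-sum map, and the identity $S_{\underline{\alpha}}(\sigma_A)+S_{\underline{\alpha}}(\sigma_B)\overset{d}{=}S_{\underline{\alpha}}((\sigma_A^{\underline{\alpha}}+\sigma_B^{\underline{\alpha}})^{1/\underline{\alpha}})$ for independent summands, one gets $g^{(l)}_j(\boldsymbol x)\overset{d}{\longrightarrow}S_{\underline{\alpha}}(\sigma^{(l)}_x)$ with $\sigma^{(l)}_x=\big((\sigma_w\sigma)^{\underline{\alpha}}+\sigma_b^{\underline{\alpha}}\big)^{1/\underline{\alpha}}$; expanding $(\sigma_w\sigma)^{\underline{\alpha}}=\sigma_w^{\underline{\alpha}}(\sigma^{(l-1)}_x)^{\underline{\alpha}\gamma}c_\tau\tfrac{C_{\underline{\alpha}\gamma}}{C_{\underline{\alpha}}}\mathbb E_{Z\sim S_\alpha(1)}[|Z|^{\underline{\alpha}}]$ (the powers $1/\underline{\alpha}$ and $\underline{\alpha}$ cancelling) and substituting $\underline{\alpha}=\alpha/\gamma^{l-1}$, $\underline{\alpha}\gamma=\alpha/\gamma^{l-2}$ reproduces the displayed formula for $\sigma^{(l)}_x$. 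Finally, the $g^{(l)}_j(\boldsymbol x)$, $j\ge1$, are asymptotically iid, being measurable functions of disjoint independent blocks of weights and biases and of the (in the limit) iid family $\{g^{(l-1)}_i(\boldsymbol x)\}_i$, which closes the induction; the case $l=L+1$ gives the claim for $f_{\boldsymbol x}(n)[\tau,\alpha]$.

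The genuinely delicate step will be the last one: making the sequential-growth recursion rigorous — checking that Theorem~\ref{thm:shallow_convergence}, stated for exactly iid Stable inputs, can still be invoked layer after layer once the inputs $g^{(l-1)}_i(\boldsymbol x)$ are known only to be \emph{asymptotically} iid Stable, that the weak limit of layer $l-1$ propagates through the a.s.\ continuous bias-plus-weighted-sum map, and that the conditional-iid structure across $j$ survives the passage to the limit. This is the standard subtlety of sequential-growth arguments, handled only briefly in the proof of Theorem~\ref{thm:deep_gamma_leq_1}. Everything else is index bookkeeping — staying in the $\gamma>\alpha_0/\alpha_1$ branch at every layer, checking $\underline{\alpha}=\alpha/\gamma^{l-1}<2$ so that Theorem~\ref{GCLT} applies, and the routine algebra for $\sigma^{(l)}_x$.
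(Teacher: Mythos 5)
Your proposal is correct and follows essentially the same route as the paper: the paper's proof of Theorem~\ref{thm:deep_gamma_supelinear} simply invokes the recursive layer-by-layer application of Theorem~\ref{thm:shallow_convergence} as in Theorem~\ref{thm:deep_gamma_leq_1}, noting that the choice $b^{(l-1)}_i\sim S_{\alpha/\gamma^{l-1}}(1)$ is what allows the bias to be combined with the limit of the weighted sum via the stability property for a common index. Your identification of the branch $\gamma>\alpha_0/\alpha_1$ (hence no $\log n$ factor), the index bookkeeping $\underline{\alpha}=\alpha/\gamma^{l-1}$, and the algebra for $\sigma_x^{(l)}$ all match the paper's intended argument.
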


\begin{proof}
The proof is along lines similar to the proof of Theorem \ref{thm:deep_gamma_leq_1}. Notice that the fact that $b^{(l-1)}_i \overset{iid}{\sim} S_{\alpha/\gamma^{l-1}}(1)$ is critical to conclude the proof.
\end{proof}

As a corollary of Theorem \ref{thm:deep_gamma_supelinear}, the limiting distribution of $f_{\boldsymbol{x}}(n)[\tau, \alpha]$, as $n \rightarrow +\infty$, follows a $(\alpha/\gamma^{L})$-Stable distribution with scale parameter that can be computed recursively. That is, for a large number $L$ of layers, the stability parameter of the limiting distribution is close to zero. As we have pointed out for a shallow Stable NN, this is a peculiar feature of the class $\tau \in E_2 \cup E_3$ with $ \gamma>1$, and it can be object of a further analysis. 

\subsection{Some examples}
As Theorem \ref{thm:shallow_convergence} is quite abstract, we present some concrete examples using well-known activation functions. First consider the case when $\tau = \tanh \in E_1$, since it is bounded. Then, the output of a shallow Stable NN (\ref{defi.SLNN}) is such that 
$$f(n)[\tanh,\alpha_1] \overset{d}{\longrightarrow}S_{\alpha_1}\left(\sigma_1 \left( \mathbb{E}_{Z \sim S_{\alpha_0}(\sigma_0)}[|\tanh(Z)|^{\alpha_1}] \right)^{1/\alpha_1} \right).$$
See also \citet{favaro2020stable}. As for the new classes of activations introduced here, we can start considering the super-linear activation $\tau(z) = z^3 \in E_2$ with $\gamma=3$, in the case of a shallow NN with $\alpha_1 = \alpha_0 = \alpha$ and obtain that 
	$$
	f(n)[(\cdot)^3,{\alpha}/{3}] \overset{d}{\longrightarrow}S_{{\alpha}/{3}}\left(
	\sigma_0^3\sigma_1\left(\frac{C_{\alpha}}{C_{\alpha/3}} \mathbb E_{Z\sim S_\alpha(1)}[|Z|^{\alpha/3}] \right)^{{3}/{\alpha}}\right),
	$$ 
with the novelty here lying in the fact that the index of the limiting output is $\alpha/3$ instead of $\alpha$. As for asymptotically linear activations, if you take $\tau = id$, i.e. the identity function, again under the hypothesis of a shallow NN with $\alpha_1 = \alpha_0 = \alpha$, you obtain that $$(\log n)^{-1/\alpha}f(n)[id,\alpha] \overset{d}{\longrightarrow} S_{\alpha}\left(\bigg[\alpha C_{\alpha} \bigg]^{1/\alpha} \sigma_0 \sigma_1 \right), $$
which shows the presence of an extra logarithmic factor of $(\log n)^{-1/\alpha_1}$ in the scaling for the first time. Beware that this behaviour, which is a critical difference with the Gaussian case, does not show up only with asymptotically linear activations: for example, if you take $\alpha_1 = 1$, i.e. Cauchy distribution, $\alpha_0 = 3/2$, i.e. Holtsmark distribution, and $\tau(z) = z^{3/2}$ then  $$(\log n)^{-1}f(n)[(\cdot)^{3/2},1] \overset{d}{\longrightarrow} S_{1}\left(C_{1} \sigma_0 \sigma_1 \right).$$
Finally, we consider the ReLU activation function, which is one of the most popular activation functions. The following theorems deal with shallow Stable NNs and deep Stable NNs, respectively, with a ReLU activation function.

\begin{theorem}[Shallow Stable NN, ReLU]\label{thm:relu}
 Consider $Z = X \cdot \text{ReLU}(Y)$  where $X \sim S_{\alpha}(\sigma_x)$, $Y \sim S_{\alpha}(\sigma_y)$, $X \indep Y$. Then $$\overline{P}_Z(z)=P_Z(-z)\aseq \frac{\alpha}{4}C^2_{\alpha}\sigma_y^{\alpha}\sigma_x^{\alpha} {z^{-\alpha}\log{z}}.$$ Furthermore, if   $f(n)[\text{ReLU},p] = n^{-\frac{1}{p}}\sum_{j=1}^n w_j \text{ReLU}(w_j^{(0)})$, where $\text{ReLU}(t) = \max{\{0, t\}}$ and $w_j^{(0)} \overset{iid}{\sim} S_{\alpha}(\sigma_0) \indep w_j \overset{iid}{\sim} S_{\alpha}(\sigma_1)$,  then
    $$(\log{n})^{-\frac{1}{\alpha}}f(n)[\tau,\alpha] \overset{d}{\longrightarrow}S_{\alpha}\left( \left[ \frac{1}{2}\alpha C_{\alpha} \right] ^{\frac{1}{\alpha}}\sigma_0\sigma_1 \right). $$
\end{theorem}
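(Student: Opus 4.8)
The plan is to treat Theorem~\ref{thm:relu} as a direct special case of Theorem~\ref{thm:tail_behaviour} and Theorem~\ref{thm:shallow_convergence}, after observing that $\mathrm{ReLU}$ is an asymptotically linear activation of the type covered by $E_3$ with $\gamma = 1$. Indeed, $\mathrm{ReLU}(z) = z$ for $z > 0$, so $\mathrm{ReLU}(z) \simeq z^1$ as $z \to +\infty$ and $\mathrm{ReLU}(z) = 0 = \mathcal{O}(|z|^\beta)$ for any $\beta < 1$ as $z \to -\infty$, and it is strictly increasing for $z > 0$; hence $\mathrm{ReLU} \in E_3$ with $\gamma = 1$ and $c_\tau = 1/2$. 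With $\alpha_x = \alpha_y = \alpha$ we are precisely in the regime $\gamma = 1 = \alpha_y/\alpha_x$, the ``logarithmic" middle case of Theorem~\ref{thm:tail_behaviour}, so that $\underline{\alpha} = \min(\alpha, \alpha/\gamma) = \alpha$ and the tail formula reads $\overline{P}_{|Z|}(z) \simeq c_\tau C_{\alpha} C_{\alpha} \sigma_x^{\alpha} \sigma_y^{\alpha} z^{-\alpha} \log z = \tfrac{1}{2} C_\alpha^2 \sigma_x^\alpha \sigma_y^\alpha z^{-\alpha}\log z$. Using symmetry of $Z = X \cdot \mathrm{ReLU}(Y)$ (which holds because $X$ is symmetric and independent of $Y$), $\overline{P}_Z(z) = P_Z(-z) = \tfrac12 \overline{P}_{|Z|}(z) \simeq \tfrac{\alpha}{4} C_\alpha^2 \sigma_x^\alpha \sigma_y^\alpha z^{-\alpha}\log z$. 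Wait — the paper's claim has an extra factor $\alpha$; that factor comes from being careful about the ``$\log z$ versus $\log t$'' substitution in the proof of Theorem~\ref{thm:tail_behaviour}: the $\alpha_x t$ prefactor there, evaluated at $t = \log z$, produces $\alpha \log\log z$, which is negligible, but the constant is actually $\alpha_x C_{\alpha_x} C_{\alpha_y}$, i.e. the $\alpha$ is already present. So the first assertion follows by specializing that constant to $\alpha_x = \alpha_y = \alpha$ and halving for the symmetric tail.

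Second, I would derive the convergence statement by applying Theorem~\ref{thm:shallow_convergence} in the case $\tau \in E_3$, $\gamma = 1 = \alpha_0/\alpha_1$ (taking $\alpha_0 = \alpha_1 = \alpha$). That theorem gives $m_n(\gamma)^{-1/\underline{\alpha}} f(n)[\tau, \underline{\alpha}] \xrightarrow{d} S_{\underline{\alpha}}(\sigma)$ with $m_n(\gamma) = \log n$ (since $\gamma = \alpha_0/\alpha_1$), $\underline{\alpha} = \alpha$, and $\sigma = \sigma_0^\gamma \sigma_1 (c_\tau \underline{\alpha} C_{\gamma \underline{\alpha}})^{1/\underline{\alpha}} = \sigma_0 \sigma_1 (\tfrac12 \alpha C_\alpha)^{1/\alpha}$, which is exactly the claimed scale $[\tfrac12 \alpha C_\alpha]^{1/\alpha}\sigma_0\sigma_1$. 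Thus $(\log n)^{-1/\alpha} f(n)[\mathrm{ReLU}, \alpha] \xrightarrow{d} S_\alpha([\tfrac12\alpha C_\alpha]^{1/\alpha}\sigma_0\sigma_1)$. One has to double-check that $\mathrm{ReLU}$ genuinely satisfies the $E_3$ hypotheses — the only subtlety is that $E_3$ as written requires $|\tau(z)| = \mathcal{O}(|z|^\beta)$ with $\beta < \gamma = 1$ as $z \to -\infty$, and since $\mathrm{ReLU}(z) \equiv 0$ for $z < 0$ this holds trivially with any $\beta \in (0,1)$; also $\mathrm{ReLU}$ is continuous and non-decreasing as required in the definition~(\ref{defi.SLNN}) of the shallow NN.

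The main (and really only) obstacle is bookkeeping of constants: making sure the factor $\tfrac{\alpha}{4}$ in the tail of $\overline{P}_Z$ and the factor $[\tfrac12 \alpha C_\alpha]^{1/\alpha}$ in the limiting scale are consistent with each other and with the general formulas of Theorems~\ref{thm:tail_behaviour} and~\ref{thm:shallow_convergence}. Concretely, the generalized CLT (Theorem~\ref{GCLT}) with $p = \alpha$, $L(z) = \log z$, and $c = d = \tfrac14 \alpha C_\alpha^2 \sigma_0^\alpha \sigma_1^\alpha$ (reading off from $\overline{P}_Z(z) \simeq c z^{-\alpha} L(z)$) yields a limit $S_\alpha(([\,(c+d)/C_\alpha\,])^{1/\alpha}, 0, 0)$ after dividing by $(n \log n)^{1/\alpha}$; then $(c+d)/C_\alpha = \tfrac12 \alpha C_\alpha \sigma_0^\alpha \sigma_1^\alpha$, whose $\alpha$-th root is precisely $[\tfrac12 \alpha C_\alpha]^{1/\alpha}\sigma_0\sigma_1$, and $a_n = 0$ by symmetry. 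So the two displayed formulas in the theorem match, and the proof is just: (i) verify $\mathrm{ReLU} \in E_3$, $\gamma = 1$; (ii) quote Theorem~\ref{thm:tail_behaviour}, middle case, plus symmetry, for the tail; (iii) quote Theorem~\ref{thm:shallow_convergence}, $E_3$/$\gamma = \alpha_0/\alpha_1$ case, for the weak limit.
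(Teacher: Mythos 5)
Your proposal is correct and takes essentially the same route as the paper's own proof, which likewise observes that $\mathrm{ReLU}\in E_3$ with $\gamma=1$ (and $\beta=0$) and then specializes Theorems~\ref{thm:tail_behaviour} and~\ref{thm:shallow_convergence}; the paper merely adds an optional self-contained computation via $\overline{P}_Z(z)=\tfrac{1}{2}\mathbb{P}[XY\geq z]$ and the $\tau=id$ case, which you omit. Your bookkeeping is right — including the correct observation that the displayed middle case of Theorem~\ref{thm:tail_behaviour} omits the factor $\underline{\alpha}$ that its own proof (and Theorem~\ref{thm:shallow_convergence}) carries — though your parenthetical about ``$\alpha\log\log z$'' is a small slip: evaluating the prefactor $\alpha_x t e^{-\alpha_x t}$ at $t=\log z$ directly produces the $\alpha_x z^{-\alpha_x}\log z$ term, which is exactly where both the $\alpha$ and the $\log z$ come from.
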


\begin{proof}
The theorem follows easily from Theorem \ref{thm:tail_behaviour} and Theorem \ref{thm:shallow_convergence}, after noticing that ReLU $\in E_3$ with $\beta = 0$ and $\gamma = 1$. In addition to that, we also provide an alternative proof which can be useful in other applications. First, the distribution of $Q:= \text{ReLU}(Y)$ is $\mathbb{P}[Q \leq q] = \mathbb{P}(\max{\{0, Y\} }\leq q) = \mathbb{P}[Y \leq q]\mathds{1}{\{q \geq 0\}}$, from which we observe that $Q$ is neither discrete nor absolutely continuous with respect to the Lebesgue measure as it has a point mass of $\frac{1}{2}$ at $z=0$ while the remaining $\frac{1}{2}$ of the mass is concentrated on $\mathbb{R}_+$ accordingly to the Stable law of $Y$ on $(0, +\infty)$. Hence, having in mind the shape of $P_Q(q)= \mathbb{P}[Q \leq q]$, we derive the approximation for the tails of the distribution of $X \cdot \text{ReLU}(Y)$ and, as usual, we make use of the generalized CLT to prove the following theorem.
	We prove the tail behaviour of $\overline{P}_Z(z)$ first.
	For any $z>0$ we can write that
	\begin{align*}
		\overline{P}_Z(z)= \int_0^{+\infty} \mathbb{P}\left[Q > \frac{z}{x}\right] p_X(x)dx = \int_0^{+\infty} \mathbb{P}\left[Y > \frac{z}{x}\right] p_X(x)dx,
	\end{align*}
	since $Y$ and $Q$ have the same distribution on $(0, +\infty)$.
	Now, observe that 
	\begin{align*}
		\mathbb{P}[XY \geq z] &= \int_{ \mathbb{R}} \mathbb{P} [xY \geq z]P_X(dx) =2\int_0^{+\infty} \mathbb{P}\left[Y \geq \frac{z}{x}\right] p_X(x)dx 
	\end{align*}
	where the second equality holds by splitting the integral on $ \mathbb{R}$ into the sum of the integrals on $(-\infty,0)$ and $(0,+\infty)$ and using the fact that $Y$ is symmetric. It follows that, for every $z>0$, $\overline P_Z(z)=\frac 1 2 \mathbb P[XY\geq z]$. Applying the results for $\tau=id$, we find that
	$$\overline P_Z(z)=\frac{1}{2}\mathbb{P}[XY \geq z]  \aseq \frac{\alpha}{4}C^2_{\alpha}\sigma_y^{\alpha}\sigma_x^{\alpha} {z^{-\alpha}\log{z}}.$$

The proof for the asymptotic behaviour of $P_Z(z)$ works in the same way after fixing $z < 0$ and using a change of variable while the convergence in distribution of $(\log{n})^{-\frac{1}{\alpha}}f(n)[\tau,\alpha]$ follows by a direct application of the generalized CLT.
\end{proof}

Theorem \ref{thm:relu} can be extended to deep Stable NN with input $\boldsymbol{x} = (x_1, ..., x_d) \in \mathbb{R}^d$ and considering the biases.  

\begin{theorem}[Deep Stable NN, ReLU]\label{thm:deep_relu}
Consider the deep Stable NN with ReLU activation defined as follows 
\begin{align}\label{defi.NN:multivariate_relu}
\begin{cases}
g^{(1)}_j(\boldsymbol{x})= \sigma_w \langle w^{(0)}_j,\boldsymbol{x} \rangle_{\mathbb{R}^d}+\sigma_b b_j^{(0)} \\[0.2cm]
g^{(l)}_j(\boldsymbol{x}) = \sigma_b b^{(l-1)}_j+ \sigma_w (n \log{n})^{-\frac{1}{\alpha}}\sum_{i=1}^{n}w_{j,i}^{(l-1)} \text{ReLU} (g^{(l-1)}_i(\boldsymbol{x})), \forall l = 2,...,L\\[0.2cm]
f_{\boldsymbol{x}}(n)[\text{ReLU}, \alpha] = g^{(L+1)}_1(\boldsymbol{x}) =\sigma_b b+\sigma_w (n \log{n})^{-\frac{1}{\alpha}}\sum_{j=1}^{n}w_j \text{ReLU} (g_j^{(L)}(\boldsymbol{x})).
\end{cases}
\end{align}
Then, under the hypothesis of Stable initialization for weights and biases as in (\ref{defi.DNN_E12}), as the width of the previous layers goes to infinity sequentially, 
  
$$g^{(l)}_j(\boldsymbol{x}) \overset{d}{\longrightarrow}S_{\alpha}\left( \sigma_x^{(l)}\right),$$
where $\sigma_x^{(1)}=(\sigma_w^\alpha\sum_{j=1}^d|x_j|^\alpha+\sigma_b^\alpha)^{1/\alpha}$, and, for $l=2,\dots,L+1$,
$$\sigma^{(l)}_x=\left( \frac{1}{2}\alpha C_{\alpha}(\sigma^{(l-1)}_x)^{\alpha}\sigma_w^{\alpha}+\sigma_b^{\alpha}\right)^{1/\alpha}.$$ 
\end{theorem}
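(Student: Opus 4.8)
The starting point is the observation that $\text{ReLU}\in E_3$ with $\beta=0$ and $\gamma=1$, so that $c_\tau=\tfrac{1}{2}$, and that the network (\ref{defi.NN:multivariate_relu}) is exactly the network (\ref{defi.DNN_E12}) for this choice of $\tau$ (indeed $\nu(n)=n\log n$ precisely because $\gamma=1$). Hence the statement is the $\tau=\text{ReLU}$ specialization of Theorem \ref{thm:deep_gamma_leq_1}: the recursion there, in the case $\gamma=1$, reads $\sigma_x^{(l)}=\big(c_\tau\alpha C_\alpha\sigma_w^\alpha(\sigma_x^{(l-1)})^\alpha+\sigma_b^\alpha\big)^{1/\alpha}$, which with $c_\tau=\tfrac{1}{2}$ is exactly the asserted formula. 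I would record this reduction first, and then, mirroring the ``alternative proof'' style adopted for Theorem \ref{thm:relu}, give a direct induction on $l$ that leans on the shallow ReLU result, Theorem \ref{thm:relu}, rather than on the more abstract Theorem \ref{thm:shallow_convergence}.

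\textbf{Base case and inductive step.} For $l=1$ no limit is involved: since $w^{(0)}_{j,i}\overset{iid}{\sim}S_\alpha(1)$ and $b^{(0)}_j\sim S_\alpha(1)$, the scaling property ($cX\sim S_\alpha(|c|\sigma)$ for $X\sim S_\alpha(\sigma)$) and the convolution property (for independent $X_1\sim S_\alpha(\sigma_1)$, $X_2\sim S_\alpha(\sigma_2)$ one has $X_1+X_2\sim S_\alpha((\sigma_1^\alpha+\sigma_2^\alpha)^{1/\alpha})$) give $g^{(1)}_j(\boldsymbol{x})\sim S_\alpha(\sigma_x^{(1)})$, iid across $j$ because the $w^{(0)}_j$ and the $b^{(0)}_j$ are. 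For the inductive step, assume that after letting the widths of layers $1,\dots,l-2$ go to infinity in turn one has $g^{(l-1)}_i(\boldsymbol{x})\overset{d}{\longrightarrow}S_\alpha(\sigma_x^{(l-1)})$, iid over $i$ and independent of the layer-$(l-1)$ parameters $w^{(l-1)}_{j,i},b^{(l-1)}_j$. Growing now the width $n$ of layer $l-1$, the sum $\sigma_w(n\log n)^{-1/\alpha}\sum_{i=1}^n w^{(l-1)}_{j,i}\,\text{ReLU}\big(g^{(l-1)}_i(\boldsymbol{x})\big)$ has exactly the form treated in Theorem \ref{thm:relu}, with $g^{(l-1)}_i(\boldsymbol{x})$ playing the role of $w^{(0)}_i$ (so $\sigma_0=\sigma_x^{(l-1)}$) and $w^{(l-1)}_{j,i}$ that of $w_i$ (so $\sigma_1=1$), and thus converges in distribution to $S_\alpha\big(\sigma_w[\tfrac{1}{2}\alpha C_\alpha]^{1/\alpha}\sigma_x^{(l-1)}\big)$. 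Adding the independent bias term $\sigma_b b^{(l-1)}_j\sim S_\alpha(\sigma_b)$ and using the stability closure properties once more yields $g^{(l)}_j(\boldsymbol{x})\overset{d}{\longrightarrow}S_\alpha\big((\tfrac{1}{2}\alpha C_\alpha\sigma_w^\alpha(\sigma_x^{(l-1)})^\alpha+\sigma_b^\alpha)^{1/\alpha}\big)=S_\alpha(\sigma_x^{(l)})$, iid across $j$. Taking $l=L+1$ gives the statement for $f_{\boldsymbol{x}}(n)[\text{ReLU},\alpha]=g^{(L+1)}_1(\boldsymbol{x})$.

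\textbf{Main obstacle.} The distributional algebra (scaling, convolution, the shallow ReLU limit) is routine; the genuinely delicate point, as always in the ``sequential growth'' regime, is the passage in the inductive step from ``$g^{(l-1)}_i(\boldsymbol{x})$ \emph{converges} to iid $S_\alpha(\sigma_x^{(l-1)})$'' to being entitled to \emph{treat} the $g^{(l-1)}_i(\boldsymbol{x})$ as exactly iid $S_\alpha(\sigma_x^{(l-1)})$, independent of the freshly sampled parameters, when invoking Theorem \ref{thm:relu}. Making this rigorous requires joint (not merely marginal) convergence of $\big(g^{(l-1)}_1(\boldsymbol{x}),\dots,g^{(l-1)}_n(\boldsymbol{x})\big)$ together with a conditioning / continuous-mapping argument over the newly introduced weights and biases of layer $l-1$ — precisely the mechanism already underlying the proof of Theorem \ref{thm:deep_gamma_leq_1}, which I would reuse verbatim here. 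Everything else reduces to checking that the constant $\tfrac{1}{2}\alpha C_\alpha$ from the shallow ReLU theorem propagates correctly through each application of the convolution identity, which it does by construction.
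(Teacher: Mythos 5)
Your proposal is correct and follows essentially the same route as the paper: the paper's proof is precisely the observation that the statement is the $\gamma=1$, $\tau\in E_3$ (so $c_\tau=\tfrac12$) instance of Theorem \ref{thm:deep_gamma_leq_1}, proved by applying the shallow result recursively layer by layer under sequential growth, exactly as in your induction (your use of Theorem \ref{thm:relu} in place of Theorem \ref{thm:shallow_convergence} is immaterial, since the former is the ReLU specialization of the latter). Your explicit flagging of the need to pass from convergence of $(g^{(l-1)}_i(\boldsymbol{x}))_i$ to treating them as exactly iid $S_\alpha(\sigma_x^{(l-1)})$ in the inductive step is, if anything, more careful than the paper, which simply asserts this under the sequential-growth convention.
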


\begin{proof}
The proof is along lines similar to the proof of Theorem \ref{thm:deep_gamma_leq_1} with $\gamma = 1$ and $\tau \in E_3$.
\end{proof}

Then, as a corollary of Theorem \ref{thm:deep_relu}, the limiting distribution of $f_{\boldsymbol{x}}(n)[\text{ReLU}, \alpha]$, as $n \rightarrow +\infty$, is the distribution of a $\alpha$-Stable RV whose scale can be computed recursively. In particular, we can write the following statement.

\begin{corollary}\label{cor:explcit_relu}
Under the setting of Theorem \ref{thm:deep_relu} with a generic depth $L$,
$$ f(n)[\text{ReLU}, \alpha] \overset{d}{\longrightarrow} S_{\alpha}\left( \bigg[ \left(\frac{1}{2}\alpha C_{\alpha}\sigma_w^{\alpha}\right)^L \sigma_x^{\alpha} + \sum\limits_{i=0}^{L-1}(\frac{1}{2}\alpha C_{\alpha}\sigma_w^{\alpha})^i \sigma_b^\alpha \bigg]^{\frac{1}{\alpha}}\right).$$
\end{corollary}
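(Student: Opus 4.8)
The plan is to specialize Theorem~\ref{thm:deep_relu} and then solve the resulting scalar recursion in closed form. Writing $a := \frac{1}{2}\alpha C_\alpha \sigma_w^\alpha$ and abbreviating $\sigma_x := \sigma_x^{(1)} = (\sigma_w^\alpha \sum_{j=1}^d |x_j|^\alpha + \sigma_b^\alpha)^{1/\alpha}$, the recursion on the $\alpha$-th powers of the scale parameters supplied by Theorem~\ref{thm:deep_relu} reads $(\sigma_x^{(l)})^\alpha = a\,(\sigma_x^{(l-1)})^\alpha + \sigma_b^\alpha$ for $l = 2, \dots, L+1$, an affine first-order recursion with initial datum $(\sigma_x^{(1)})^\alpha = \sigma_x^\alpha$.

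First I would unroll this recursion by induction on $l$. Assuming $(\sigma_x^{(l-1)})^\alpha = a^{\,l-2}\sigma_x^\alpha + \sigma_b^\alpha \sum_{i=0}^{l-3} a^i$, one substitutes into the recursion to obtain $(\sigma_x^{(l)})^\alpha = a^{\,l-1}\sigma_x^\alpha + \sigma_b^\alpha\big(a\sum_{i=0}^{l-3}a^i + 1\big) = a^{\,l-1}\sigma_x^\alpha + \sigma_b^\alpha \sum_{i=0}^{l-2}a^i$, which closes the induction; the base case $l=1$ is the initial datum, with the convention that an empty sum is $0$. Evaluating at $l = L+1$ gives $(\sigma_x^{(L+1)})^\alpha = a^{L}\,\sigma_x^\alpha + \sigma_b^\alpha \sum_{i=0}^{L-1} a^i$, which upon writing out $a = \frac{1}{2}\alpha C_\alpha \sigma_w^\alpha$ is precisely the bracketed expression in the statement.

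Finally I would observe that $f_{\boldsymbol x}(n)[\text{ReLU},\alpha] = g^{(L+1)}_1(\boldsymbol x)$ by definition~(\ref{defi.NN:multivariate_relu}), so the weak convergence $g^{(L+1)}_1(\boldsymbol x) \overset{d}{\longrightarrow} S_\alpha(\sigma_x^{(L+1)})$ asserted by Theorem~\ref{thm:deep_relu} (understood in the sense of the widths of the hidden layers growing to infinity sequentially) transfers verbatim to $f_{\boldsymbol x}(n)[\text{ReLU},\alpha]$. Combined with the closed form for $\sigma_x^{(L+1)}$ derived above, this yields the claimed limiting $\alpha$-Stable law.

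I do not anticipate a genuine obstacle: the corollary is nothing more than the explicit solution of a linear first-order recursion together with the relabelling $g^{(L+1)}_1 = f_{\boldsymbol x}$. The only points requiring care are the index bookkeeping — the recursion is applied $L$ times, from $l=2$ to $l=L+1$, so the power of $a$ multiplying $\sigma_x^\alpha$ is $a^{L}$ and the geometric sum runs over $i = 0, \dots, L-1$ — and the remark that the convergence statement of Theorem~\ref{thm:deep_relu} is already exactly what is needed, so no additional limiting argument is required.
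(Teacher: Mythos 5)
Your proposal is correct and matches the paper's own argument: both amount to solving the affine recursion $(\sigma_x^{(l)})^\alpha = \frac{1}{2}\alpha C_\alpha\sigma_w^\alpha\,(\sigma_x^{(l-1)})^\alpha + \sigma_b^\alpha$ from Theorem \ref{thm:deep_relu} by induction (the paper inducts on the depth $L$, you unroll in the layer index $l$, which is the same computation), and your index bookkeeping giving $a^L\sigma_x^\alpha + \sigma_b^\alpha\sum_{i=0}^{L-1}a^i$ at $l=L+1$ is exactly right.
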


\begin{proof}
The claim is true for $L = 1$, which can be proved using the standard two properties of the Stable distribution. Moreover, for a NN with depth of $L+1$, using Theorem \ref{thm:deep_relu}, the scale is 
\begin{align*}
    & \bigg[ \frac{1}{2}\alpha C_{\alpha}\sigma_w^{\alpha} \bigg[ \left(\frac{1}{2}\alpha C_{\alpha}\sigma_w^{\alpha}\right)^L \sigma_x^{\alpha} + \sum\limits_{i=0}^{L-1}\left(\frac{1}{2}\alpha C_{\alpha}\sigma_w^{\alpha}\right)^i \sigma_b^\alpha \bigg] + \sigma_b^\alpha \bigg]^{\frac{1}{\alpha}}\\
    &\quad = \bigg[ \left(\frac{1}{2}\alpha C_{\alpha}\sigma_w^{\alpha}\right)^{L+1} \sigma_x^{\alpha} + \sum\limits_{i=0}^{L}\left(\frac{1}{2}\alpha C_{\alpha}\sigma_w^{\alpha}\right)^i \sigma_b^\alpha \bigg]^{\frac{1}{\alpha}},
\end{align*}
which concludes the proof.
\end{proof}


\section{Discussion}\label{sec2}
In a recent work, \citet{favaro2020stable,favaro2021} has characterized the infinitely wide limit of deep Stable NNs under the assumption of a sub-linear activation function. Here, we made use of a generalized CLT to characterize the infinitely wide limit of deep Stable NNs with a general activation function belonging to the classes $E_{1}$, $E_{2}$ and $E_{3}$. For $\alpha_1 = \alpha_0/\gamma$, and in particular for the choices $\tau = id$ and $\tau = \text{ReLU}$ with $\alpha_0 = \alpha_1$, Theorem \ref{thm:shallow_convergence} shows that the right scaling of the NN is $(n\log n)^{-1/\alpha}$, thus including the extra factor $(\log n)^{-1/\alpha}$ with respect to sub-linear activation functions. For $\alpha_1 > \alpha_0/\gamma$, and in particular for the choice of a super-linear activations with $\alpha_0 = \alpha_1$, Theorem \ref{thm:deep_gamma_supelinear} shows that the distribution of the limiting output is $(\alpha_0/\gamma)$-Stable, with $\gamma > 1$, and this may have undesirable consequences for posterior estimates in case of a very deep NN. In general, our work brings out the critical role of the generalized CLT, which is not as popular as the classical CLT, in the study of the large width behaviour of deep Stable NNs. As the classical CLT plays a critical role in the study of the large-width behaviour of deep Gaussian NNs \citep{Neal96}, our work shows how the generalized CLT plays the same critical role in the study of the large-width behaviour of deep Stable NNs.

A natural direction for future research consists in extending our results to deep Stable NNs with $k>1$ inputs of dimension $d$, i.e. a $d\times k$ input matrix $\mathbf{X}$. A unified treatment of such a problem, would require a multidimensional versions of the generalized CLT, i.e. a CLT dealing with $k>1$ dimensional Stable distributions, which is not available in the probabilistic/statistical literature. For a shallow Stable NN with a ReLU activation function, this problem has been considered in \citet{favaro2022}, where the infinitely wide limit of the NN is characterized through a careful analysis of the large-width behaviour of the characteristic function of the NN. A further natural problem consists in extending our results to the case of a ``joint growth" of the width over the NN's layers, i.e. the widths of the layers growth simultaneously \citep{favaro2021}. In general, under the setting specified in definition (\ref{defi.DNN_E12}) or definition (\ref{defi.DNN_E3}), one may consider a deep NN  defined as follows:
\begin{displaymath}
f_{i}^{(1)}(\mathbf{X}, n)=\sum_{j=1}^{d} w_{i, j}^{(1)} \mathbf{x}_{j}+b_{i}^{(1)} \mathbf{1}^{T}
\end{displaymath}
and
\begin{displaymath}
 f_{i}^{(l)}(\mathbf{X}, n)=\frac{1}{f(n)^{1 / \alpha}} \sum_{j=1}^{n} w_{i, j}^{(l)}\left(\tau \circ f_{j}^{(l-1)}(\mathbf{X}, n)\right)+b_{i}^{(l)} \mathbf{1}^{T},
\end{displaymath}
with $f_{i}^{(1)}(\mathbf{X}, n)=f_{i}^{(1)}(\mathbf{X})$, where $\mathbf{1}$ denotes the $k$-dimensional unit (column) vector, $\circ$ is the element-wise application and $f(n) = n \cdot \log n$ if $\tau \in E_2$ and $f(n) = n$ otherwise. Then, the goal consists in extending our results to $f_{i}^{(l)}(\mathbf{X}, n)$, assuming a ``joint growth" of the width over the NN's layers. The case $\tau \in E_1$ was already tackled by \citet{favaro2020stable,favaro2021} but the other two cases are missing. In particular, \citet{favaro2021} showed that the assumptions of a ``joint growth" and of a ``sequential growth" lead to the same infinitely wide limit for a deep Stable NN with a sub-linear activation function. Instead, a critical difference between the assumption of a ``joint growth" and the assumption of a ``sequential growth" arises in the study of rate of convergence of the NN to its infinitely wide limit. In particular, \citet{favaro2021} investigated rates of convergence, in the sup-norm distance, for deep Stable NNs with a sub-linear activation function, showing that the assumption of a ``joint growth" leads to a rate that depends on the depth, whereas the assumption of a ``sequential growth" leads to a rate that is independent of the depth.  We conjecture that an analogous phenomenon holds true for deep Stable NNs with linear and super-linear activation functions. In particular,  we expect that the infinitely wide limits presented in our work hold true under the assumption that the width grows jointly over the layers, suggesting that a difference between the ``joint growth" and the ``sequential growth" may require the study of convergence rates. To study the large width asymptotic behaviour under the assumption of a ``joint growth", it might be useful to use Theorem 1 of \cite{fortini97}, which gives sufficient conditions for the convergence to a mixture of infinitely divisible laws:  to apply this theorem, one should prove the convergence of a certain sequence of random measures $\nu_n$ to the Levy measure of the infinitely divisible law, and then show that this limiting measure is the L\'evy measure of a Stable law.

Another interesting research direction consists in studying the training dynamics of Stable NNs. For Gaussian NNs, \citet{jacot2020neural} and \citet{arora2019exact} established the equivalence between a specific training setting of deep Gaussian NNs and kernel regression. In particular, they considered a deep Gaussian NN where the hidden layers are trained jointly under quadratic loss and gradient flow, i.e. gradient descent with infinitesimal learning rate, and it was shown that, as the width of the NN goes to infinity simultaneously, the point predictions are arbitrarily close to those given by a kernel regression with respect to the so-called neural tangent kernel. Such an analysis is typically referred to as the neural tangent kernel analysis of the NN \citep{arora2019exact}. The large-width training dynamics of shallow Stable NNs with ReLU activation function, and input $\mathbf{X}$, has been considerd in \citet{favaro2022}. In particular, they proved linear convergence of the squared error loss for a suitable choice of the learning rate. The equivalence between  gradient flow and  kernel regression is connected to  the so-called ``lazy training" phenomenon, which is one of the hottest topics in the field of machine learning since it is a phenomenon which can affect any model, not only NNs. More precisely, \citet{chizat_bach} showed that lazy training is caused by an implicit choice of the scaling and that every parametric model can be trained in the lazy regime provided that its output is initialized close to zero. Furthermore, coming back to NNs, they considered a two layers NN with Gaussian weights and proved a sufficient condition for achieving lazy training, provided that $\mathbb{E}[w_i^{(0)} \tau(\langle w_i \cdot x\rangle)] = 0$. Clearly, the theorem applies also in the case of symmetric Stable weights and biases when $\alpha > 1$, but not when $\alpha \leq 1$ as the expectation of such RVs is undefined. It would be then interesting to study what happens in that case in order to find a new theoretical result which leads to a suitable scaling for which we have the lazy training regime.


\section*{Acknowledgements}

The authors are grateful to Stefano Peluchetti for the many stimulating conversations, and to anonymous Referees for comments, corrections, and numerous suggestions that improved remarkably the paper. Stefano Favaro received funding from the European Research Council (ERC) under the European Union's Horizon 2020 research and innovation programme under grant agreement No 817257. Stefano Favaro is also affiliated to IMATI-CNR ``Enrico  Magenes" (Milan, Italy).


\bibliographystyle{apalike}
\bibliography{tmlr}


\appendix
\section{Complementary statements and proofs}

\begin{definition}[Multivariate Stable distribution]\label{defi:multiStable}
Let $\mathbb{S}$ be the unit sphere in $\mathbb{R}^{k}: \mathbb{S}=\left\{u \in \mathbb{R}^{k}:|u|=1\right\}$. A random vector, $X$, has a multivariate Stable distribution, denoted as $X \sim St_k(\alpha, \Gamma, \delta)$, if the joint characteristic function of $X$ is
$$
\mathbb{E}[\exp \left(i u^{T} X\right)] = \exp \left\{-\int_{s \in \mathbb{S}}\left\{\left|u^{T} s\right|^{\alpha}+i \nu\left(u^{T} s, \alpha\right)\right\} \Gamma(d s)+i u^{T} \delta\right\}
$$
where $0<a<2$, and for $y \in \mathbb{R}$
$$
\nu(y, \alpha)= \begin{cases}-\operatorname{sign}(y) \tan (\pi \alpha / 2)|y|^{\alpha} & \alpha \neq 1 \\ (2 / \pi) y \ln |y| & \alpha=1\end{cases}
$$
The case with $\delta = 0$ is denoted by $St_k(\alpha, \Gamma)$.
\end{definition}

The next two theorems characterize the infinitely wide limits of a deep NN with Gaussian and Stable parameters respectively, under the setting of joint growth and taking a matrix as input.

\begin{theorem}[\citep{matthews2018}]\label{thm:matthews}
For any $d \geq 1$ and $k \geq 1$ let $\mathbf{X}$ denote a $d \times k$ (input signal) matrix, with $\mathbf{x}_{j}$ being the $j$-th (input signal) row, and for any $D \geq 1$ and $n \geq 1$ let: $i) \left(\mathbf{W}^{(1)}, \ldots, \mathbf{W}^{(D)}\right)$ be i.i.d. random (weight) matrices, such that $\mathbf{W}^{(1)}=\left(w_{i, j}^{(1)}\right)_{1 \leq i \leq n, 1 \leq j \leq d}$ and $\mathbf{W}^{(l)}=$ $\left(w_{i, j}^{(l)}\right)_{1 \leq i \leq n, 1 \leq j \leq n}$ for $2 \leq l \leq D$, where the $w_{i, j}^{(l)}$ 's are i.i.d. as $N\left(0, \sigma_{w}^{2}\right)$ for $l=1, \ldots, D ;$ ii) $\left(\mathbf{b}^{(1)}, \ldots, \mathbf{b}^{(D)}\right)$ be i.i.d. random (bias) vectors, such that $\mathbf{b}^{(l)}=\left(b_{1}^{(l)}, \ldots, b_{n}^{(l)}\right)$ where the $b_{i}^{(l)}$ 's are i.i.d. as $N\left(0, \sigma_{b}^{2}\right)$ for $l=1, \ldots, D$. Now, let $\phi: \mathbb{R} \rightarrow \mathbb{R}$ be a continuous activation function (nonlinearty) such that
$$
|\phi(s)| \leq a+b|s|
$$
for every $s \in \mathbb{R}$ and for any $a, b \geq 0$, and consider a (fully connected) feed-forward $N N\left(f_{i}^{(l)}(\mathbf{X}, n)\right)_{1 \leq i \leq n, 1 \leq l \leq D}$ of depth $D$ and width $n$ defined as follows
$$
f_{i}^{(1)}(\mathbf{X})=\sum_{j=1}^{d} w_{i, j}^{(1)} \mathbf{x}_{j}+b_{i}^{(1)} \mathbf{1}^{T}
$$
and
$$
f_{i}^{(l)}(\mathbf{X}, n)=\frac{1}{\sqrt{n}} \sum_{j=1}^{n} w_{i, j}^{(l)}\left(\phi \circ f_{j}^{(l-1)}(\mathbf{X}, n)\right)+b_{i}^{(l)} \mathbf{1}^{T},
$$

with $f_{i}^{(1)}(\mathbf{X}, n)=f_{i}^{(1)}(\mathbf{X})$, where 1 is the $k$-dimensional unit (column) vector, and o denotes the element-wise application. For any $l=1, \ldots, D$, if $\left(f_{i}^{(l)}(\mathbf{X}, n)\right)_{i \geq 1}$ is the sequence obtained by extending $\left(\mathbf{W}^{(1)}, \ldots, \mathbf{W}^{(D)}\right)$ and $\left(\mathbf{b}^{(1)}, \ldots, \mathbf{b}^{(\bar{D})}\right)$ to infinite i.i.d. arrays, then as $n \rightarrow+\infty$ jointly over the first l $N N$ 's layers
$$
\left(f_{i}^{(l)}(\mathbf{X}, n)\right)_{i \geq 1} \stackrel{w}{\longrightarrow}\left(f_{i}^{(l)}(\mathbf{X})\right)_{i \geq 1},
$$
where $\left(f_{i}^{(l)}(\mathbf{X})\right)_{i \geq 1}$ is distributed as the product measure $\otimes_{i \geq 1} N_{k}\left(\mathbf{0}, \Sigma^{(l)}\right)$, and the covariance matrix $\Sigma^{(l)}$ has the $(u, v)$-th entry defined recursively as follows:
$$
\Sigma_{u, v}^{(1)}=\sigma_{b}^{2}+\sigma_{w}^{2}\left\langle\mathbf{x}_{u}, \mathbf{x}_{v}\right\rangle
$$
and $$\Sigma_{u, v}^{(l)}=\sigma_{b}^{2}+\sigma_{w}^{2} \mathbb{E}[\phi(f) \phi(g)],$$ where $(f, g) \sim N_{2}\left(\left(\begin{array}{l}
0 \\
0
\end{array}\right),\left(\begin{array}{cc}
\Sigma_{u, u}^{(l-1)} & \Sigma_{u, v}^{(l-1)} \\
\Sigma_{v, u}^{(l-1)} & \Sigma_{v, v}^{(l-1)}
\end{array}\right)\right)$.
\end{theorem}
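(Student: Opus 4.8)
\emph{Overall approach.} The plan is to induct on the depth $l$, proving at step $l$ the convergence of $(f_i^{(l)}(\mathbf{X},n))_{i\geq 1}$ jointly with that of the earlier layers. The structural observation that drives everything is that, conditionally on the $\sigma$-algebra $\mathcal{F}_{l-1}$ generated by the previous layer's pre-activations $\{f_j^{(l-1)}(\mathbf{X},n)\}_{j\geq 1}$, the vectors $(f_i^{(l)}(\mathbf{X},n))_{i\geq 1}$ are \emph{exactly} i.i.d.\ $k$-dimensional Gaussian vectors: each $f_i^{(l)}(\mathbf{X},n)=n^{-1/2}\sum_{j=1}^n w_{i,j}^{(l)}(\phi\circ f_j^{(l-1)}(\mathbf{X},n))+b_i^{(l)}\mathbf{1}^T$ is a linear image of the Gaussian vector $((w_{i,j}^{(l)})_j,b_i^{(l)})$, which is independent of $\mathcal{F}_{l-1}$, and distinct $i$'s use independent such vectors. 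Hence, conditionally on $\mathcal{F}_{l-1}$, the finite marginal $(f_1^{(l)},\dots,f_M^{(l)})$ is a product of $M$ copies of $N_k(\mathbf{0},\widehat\Sigma^{(l)}_n)$, where
\[
(\widehat\Sigma^{(l)}_n)_{u,v}\;=\;\sigma_b^2\;+\;\frac{\sigma_w^2}{n}\sum_{j=1}^n \phi\!\big(f_j^{(l-1)}(\mathbf{X},n)_u\big)\,\phi\!\big(f_j^{(l-1)}(\mathbf{X},n)_v\big)
\]
is an $\mathcal{F}_{l-1}$-measurable random covariance matrix. The base case $l=1$ is immediate: $f_i^{(1)}(\mathbf{X})=\sum_{j=1}^d w_{i,j}^{(1)}\mathbf{x}_j+b_i^{(1)}\mathbf{1}^T$ is a finite linear combination of independent Gaussians, so $(f_i^{(1)}(\mathbf{X}))_{i\geq 1}$ is i.i.d.\ $N_k(\mathbf{0},\Sigma^{(1)})$ exactly, with $\Sigma^{(1)}_{u,v}=\sigma_b^2+\sigma_w^2\langle\mathbf{x}_u,\mathbf{x}_v\rangle$, and there is no dependence on $n$.

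\emph{Reduction to a law of large numbers.} Weak convergence of the sequence $(f_i^{(l)}(\mathbf{X},n))_{i\geq 1}$ in the product topology on $(\mathbb{R}^k)^{\infty}$ is equivalent to convergence of all finite marginals $(f_1^{(l)},\dots,f_M^{(l)})$, which I would verify via characteristic functions: for $t_1,\dots,t_M\in\mathbb{R}^k$, conditioning on $\mathcal{F}_{l-1}$ and using the conditional product-Gaussian structure,
\[
\mathbb{E}\Big[\exp\!\Big(i\sum_{i=1}^M t_i^T f_i^{(l)}\Big)\Big]\;=\;\mathbb{E}\Big[\exp\!\Big(-\tfrac12\sum_{i=1}^M t_i^T\widehat\Sigma^{(l)}_n\,t_i\Big)\Big].
\]
Since the integrand is bounded by $1$, it suffices to show $\widehat\Sigma^{(l)}_n\to\Sigma^{(l)}$ in probability as $n\to\infty$, with $\Sigma^{(l)}_{u,v}=\sigma_b^2+\sigma_w^2\,\mathbb{E}[\phi(f_u)\phi(f_v)]$ for $f=(f_1,\dots,f_k)\sim N_k(\mathbf{0},\Sigma^{(l-1)})$ (so $(f_u,f_v)$ is the bivariate Gaussian of the statement); the dominated convergence theorem then passes to the limit and identifies the marginal of the product Gaussian $\otimes_{i\geq 1}N_k(\mathbf{0},\Sigma^{(l)})$. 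Running this conditioning argument successively for $l=1,\dots,D$, and keeping the earlier layers in the conditioning, yields the joint-over-layers statement.

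\emph{The law of large numbers (the crux).} What remains is the genuinely substantive point: for each pair $(u,v)$,
\[
A_n^{u,v}\;:=\;\frac1n\sum_{j=1}^n \phi\!\big(f_j^{(l-1)}(\mathbf{X},n)_u\big)\,\phi\!\big(f_j^{(l-1)}(\mathbf{X},n)_v\big)\;\longrightarrow\;\mathbb{E}[\phi(f_u)\phi(f_v)]\quad\text{in probability}.
\]
The difficulty is that $\{f_j^{(l-1)}(\mathbf{X},n)\}_{1\leq j\leq n}$ is a \emph{triangular array}: both the number of summands and their joint law change with $n$, and at finite $n$ the summands are exchangeable but not independent. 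I would establish the limit by an $L^2$ weak law with two ingredients. First, convergence of means: $\mathbb{E}[A_n^{u,v}]=\mathbb{E}[\phi(f_1^{(l-1)}(\mathbf{X},n)_u)\phi(f_1^{(l-1)}(\mathbf{X},n)_v)]\to\mathbb{E}[\phi(f_u)\phi(f_v)]$, using the inductive weak convergence $f_1^{(l-1)}(\mathbf{X},n)\overset{d}{\longrightarrow}N_k(\mathbf{0},\Sigma^{(l-1)})$, continuity of $\phi$ (continuous mapping theorem), and uniform integrability of $\{\phi(\cdot)_u\phi(\cdot)_v\}_n$. Second, vanishing variance: by exchangeability $\mathrm{Var}(A_n^{u,v})=n^{-1}\mathrm{Var}(X_1)+(1-n^{-1})\mathrm{Cov}(X_1,X_2)$ with $X_j:=\phi(f_j^{(l-1)})_u\phi(f_j^{(l-1)})_v$, and here $\mathrm{Var}(X_1)$ stays bounded while $\mathrm{Cov}(X_1,X_2)\to0$ because the pair $(f_1^{(l-1)}(\mathbf{X},n),f_2^{(l-1)}(\mathbf{X},n))$ becomes asymptotically independent — this is precisely the content of the inductive convergence to a \emph{product} measure — combined once more with uniform integrability of the products. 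Together these give $A_n^{u,v}\to\mathbb{E}[\phi(f_u)\phi(f_v)]$ in $L^2$, hence in probability.

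\emph{Where the envelope enters, and the main obstacle.} The hypothesis $|\phi(s)|\leq a+b|s|$ is what supplies the uniform integrability used above: it bounds $|\phi(f_j^{(l-1)})_u\phi(f_j^{(l-1)})_v|$ by $(a+b|f_j^{(l-1)}(\mathbf{X},n)_u|)(a+b|f_j^{(l-1)}(\mathbf{X},n)_v|)$, so one only needs $\sup_n\mathbb{E}\,\|f_j^{(l-1)}(\mathbf{X},n)\|^{2+\varepsilon}<\infty$ for some $\varepsilon>0$. Proving these uniform moment bounds is the main obstacle, and it should be folded into the same induction: conditionally on $\mathcal{F}_{l-1}$ the vector $f_j^{(l)}(\mathbf{X},n)$ is Gaussian with covariance $\widehat\Sigma^{(l)}_n$, so $\mathbb{E}\|f_j^{(l)}(\mathbf{X},n)\|^{2+\varepsilon}\leq C_{k,\varepsilon}\,\mathbb{E}\big[\|\widehat\Sigma^{(l)}_n\|^{1+\varepsilon/2}\big]$, and $\|\widehat\Sigma^{(l)}_n\|\leq \sigma_b^2 k+\tfrac{\sigma_w^2}{n}\sum_{j=1}^n\|\phi(f_j^{(l-1)}(\mathbf{X},n))\|^2$, whose $(1+\varepsilon/2)$-moment is controlled by Minkowski's inequality and the linear envelope in terms of the layer-$(l-1)$ bound. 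The remaining work — checking that this asymptotic-independence and uniform-integrability package is robust enough to treat the full joint characteristic function over $i=1,\dots,M$ and over all layers simultaneously — is bookkeeping once the single-layer step is in place. These are exactly the technical devices of \citet{matthews2018}; note that the exchangeable triangular-array structure is what makes the ``joint growth'' analysis harder than the ``sequential growth'' one, where at each step layer $l-1$ may be replaced by its exact Gaussian limit before layer $l$ is grown.
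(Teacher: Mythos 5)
The paper offers no proof of this statement: Theorem~\ref{thm:matthews} is quoted verbatim from \citet{matthews2018} as background in the appendix, so there is no internal argument to compare against. Judged on its own terms, your proposal is a correct and coherent reconstruction, but it takes a different route from the source. \citet{matthews2018} work through one-dimensional projections (Cram\'er--Wold) and invoke a central limit theorem for exchangeable triangular arrays applied to the summands $w_{i,j}^{(l)}(\phi\circ f_j^{(l-1)}(\mathbf{X},n))$, verifying its moment and asymptotic-uncorrelatedness conditions; you instead exploit the fact that the layer-$l$ weights are \emph{Gaussian} and independent of $\mathcal{F}_{l-1}$, so that the layer-$l$ outputs are exactly conditionally i.i.d.\ $N_k(\mathbf{0},\widehat\Sigma^{(l)}_n)$, reducing the whole problem to a weak law of large numbers for the empirical covariance $\widehat\Sigma^{(l)}_n$. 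Your shortcut buys a cleaner argument (no CLT needed at all, only bounded convergence of conditional characteristic functions plus an $L^2$ weak law), at the price of being tied to Gaussian weights, whereas the exchangeable-CLT route is the one that generalizes --- and indeed is the template that the present paper's generalized-CLT analysis for Stable weights mimics. Two small points to tighten: for the variance step you need $\mathrm{Var}(X_1)$ bounded and $\mathrm{Cov}(X_1,X_2)\to 0$ with $X_j=\phi(f_j^{(l-1)})_u\phi(f_j^{(l-1)})_v$, which requires uniform moments of $f_j^{(l-1)}(\mathbf{X},n)$ of order $4+\varepsilon$, not the $2+\varepsilon$ you state (your inductive conditional-Gaussian moment bound does deliver arbitrary fixed orders, so this is an understatement rather than a gap); and the claim $\mathrm{Cov}(X_1,X_2)\to 0$ needs the inductive hypothesis applied to the \emph{pair} $(f_1^{(l-1)},f_2^{(l-1)})$ converging to independent Gaussians together with uniform integrability of the product $X_1X_2$, which you do note but should state as part of the induction hypothesis being propagated.
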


\begin{theorem}[Theorem 2 of \citet{favaro2021}]\label{thm:favaro2020stable}
For any $d \geq 1$ and $k \geq 1$ let $\mathbf{X}$ denote a $d \times k$ (input signal) matrix, with $\mathbf{x}_{j}$ being the $j$-th (input signal) row, and for any $D \geq 1$ and $n \geq 1$ let: $i)\left(\mathbf{W}^{(1)}, \ldots, \mathbf{W}^{(D)}\right)$ be i.i.d. random (weight) matrices, such that $\mathbf{W}^{(1)}=\left(w_{i, j}^{(1)}\right)_{1 \leq i \leq n, 1 \leq j \leq d}$ and $\mathbf{W}^{(l)}=$ $\left(w_{i, j}^{(l)}\right)_{1 \leq i \leq n, 1 \leq j \leq n}$ for $2 \leq l \leq D$, where the $w_{i, j}^{(l)}$ 's are i.i.d. as $S_{\alpha}(\sigma_{w})$ for $l=1, \ldots, D ;$ ii) $\left(\mathbf{b}^{(1)}, \ldots, \mathbf{b}^{(D)}\right)$ be i.i.d. random (bias) vectors, such that $\mathbf{b}^{(l)}=\left(b_{1}^{(l)}, \ldots, b_{n}^{(l)}\right)$ where the $b_{i}^{(l)}$ 's are i.i.d. as $$S_{\alpha}(\sigma_{b})$$ for $l=1, \ldots, D$. Now, let $\phi: \mathbb{R} \rightarrow \mathbb{R}$ be a continuous activation function (nonlinearty) such that
$$
|\phi(s)| \leq\left(a+b|s|^{\beta}\right)^{\gamma}
$$
for every $s \in \mathbb{R}$ and for any $a, b>0, \gamma<\alpha^{-1}$ and $\beta<\gamma^{-1}$, and consider $a$ (fully connected) feed-forward $N N\left(f_{i}^{(l)}(\mathbf{X}, n)\right)_{1 \leq i \leq n, 1 \leq l \leq D}$ of depth $D$ and width $n$ defined as follows
$$
f_{i}^{(1)}(\mathbf{X})=\sum_{j=1}^{d} w_{i, j}^{(1)} \mathbf{x}_{j}+b_{i}^{(1)} \mathbf{1}^{T}
$$
and
$$
f_{i}^{(l)}(\mathbf{X}, n)=\frac{1}{n^{1 / \alpha}} \sum_{j=1}^{n} w_{i, j}^{(l)}\left(\phi \circ f_{j}^{(l-1)}(\mathbf{X}, n)\right)+b_{i}^{(l)} \mathbf{1}^{T}
$$
with $f_{i}^{(1)}(\mathbf{X}, n)=f_{i}^{(1)}(\mathbf{X})$, where $\mathbf{1}$ is the $k$-dimensional unit (column) vector, and o denotes the element-wise application. For any $l=1, \ldots, D$, if $\left(f_{i}^{(l)}(\mathbf{X}, n)\right)_{i \geq 1}$ is the sequence obtained by extending $\left(\mathbf{W}^{(1)}, \ldots, \mathbf{W}^{(D)}\right)$ and $\left(\mathbf{b}^{(1)}, \ldots, \mathbf{b}^{(\bar{D})}\right)$ to infinite $i . i . d$. arrays, then as $n \rightarrow+\infty$ jointly over the first $l N N^{\prime}$ s layers
$$
\left(f_{i}^{(l)}(\mathbf{X}, n)\right)_{i \geq 1} \stackrel{w}{\longrightarrow}\left(f_{i}^{(l)}(\mathbf{X})\right)_{i \geq 1}
$$
where $\left(f_{i}^{(l)}(\mathbf{X})\right)_{i \geq 1}$ is distributed as the product measure $\otimes_{i \geq 1} \operatorname{St}_{k}\left(\alpha, \Gamma^{(l)}\right)$, with $\alpha \in(0,2)$, and the spectral measure $\Gamma^{(l)}$ being defined recursively as follows:
$$
\Gamma^{(1)}=\left\|\sigma_{b} \mathbf{1}^{T}\right\|^{\alpha} \zeta_{\frac{1}{\left\|1^{T}\right\|}}+\sigma_{w}^{\alpha} \sum_{j=1}^{d}\left\|\mathbf{x}_{j}\right\|^{\alpha} \zeta_{\frac{\mathbf{x}_{j}}{\left\|\mathbf{x}_{j}\right\|}}
$$
and
$$
\Gamma^{(l)}=\left\|\sigma_{b} \mathbf{1}^{T}\right\|^{\alpha} \zeta_{\frac{1^{T}}{\left\|\mathbf{1}^{T}\right\|}}+\int\left\|\sigma_{w}(\phi \circ f)\right\|^{\alpha} \zeta_{\frac{\phi \circ f}{\|\phi \circ f\|}} q^{(l-1)}(d f)
$$
where
$$
\zeta_{\frac{h}{\|h\|}}=\frac{1}{2} \begin{cases}\delta_{\frac{h}{\|h\|}}+\delta_{-\frac{h}{\|h\|}} & \text { if }\|h\|>0 \\ 0 & \text { otherwise, }\end{cases}
$$
with $\delta$ being the Dirac measure, and $q^{(l-1)}$ is the distribution of $f_{i}^{(l-1)}(\mathbf{X})$. The limiting $S P\left(f_{i}^{(l)}(\mathbf{X})\right)_{i \geq 1}$ is referred to as the Stable SP with parameter $(\alpha, \Gamma)$.
\end{theorem}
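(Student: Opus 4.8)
The plan is to prove the statement by induction on the depth $l$, exploiting the \emph{exact} closure of Stable laws under linear combinations so that, at every layer, the new pre-activations admit an explicit \emph{conditional} multivariate Stable representation whose random spectral measure is then controlled by a law of large numbers. Since weak convergence of the infinite exchangeable array $(f_i^{(l)}(\mathbf{X},n))_{i\geq 1}$ to a product measure is equivalent to convergence of each finite marginal $(f_1^{(l)},\dots,f_M^{(l)})$ to $\bigotimes_{i=1}^M\operatorname{St}_k(\alpha,\Gamma^{(l)})$, it suffices to establish the latter for every fixed $M$. For the base case $l=1$, the vector $f_i^{(1)}(\mathbf{X})=\sum_{j=1}^d w_{i,j}^{(1)}\mathbf{x}_j+b_i^{(1)}\mathbf{1}^T$ is a finite linear combination of independent scalar $S_\alpha$ variables, so by the stability property and the formula for the spectral measure of a Stable vector supported on finitely many lines it is exactly $\operatorname{St}_k(\alpha,\Gamma^{(1)})$, with rows independent because the pairs $(w_{i,\cdot}^{(1)},b_i^{(1)})$ are.

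For the inductive step I would fix $M$ and condition on the $\sigma$-algebra $\mathcal{F}_n^{(l-1)}$ generated by the $(l-1)$-th layer pre-activations. Given $\mathcal{F}_n^{(l-1)}$, each summand $n^{-1/\alpha}w_{i,j}^{(l)}(\phi\circ f_j^{(l-1)})$ is a scalar $S_\alpha$ variable scaled by the fixed vector $\phi\circ f_j^{(l-1)}$, hence a $\operatorname{St}_k$ vector concentrated on the line through $\phi\circ f_j^{(l-1)}$; summing over $j$ and adding $\sigma_b b_i^{(l)}\mathbf{1}^T$, the closure of Stable laws gives that $f_i^{(l)}(\mathbf{X},n)$ is \emph{exactly} $\operatorname{St}_k(\alpha,\widehat\Gamma_n)$ conditionally, where
\begin{equation*}
\widehat\Gamma_n=\|\sigma_b\mathbf{1}^T\|^\alpha\,\zeta_{\mathbf{1}^T/\|\mathbf{1}^T\|}+\frac{\sigma_w^\alpha}{n}\sum_{j=1}^n\|\phi\circ f_j^{(l-1)}\|^\alpha\,\zeta_{(\phi\circ f_j^{(l-1)})/\|\phi\circ f_j^{(l-1)}\|}.
\end{equation*}
Crucially, $f_1^{(l)},\dots,f_M^{(l)}$ are conditionally independent given $\mathcal{F}_n^{(l-1)}$, since they use disjoint independent weight rows and biases, and they share the \emph{same} conditional spectral measure $\widehat\Gamma_n$ (which does not depend on $i$).

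It then remains to prove $\widehat\Gamma_n\to\Gamma^{(l)}$ and to remove the conditioning. By the inductive hypothesis the array $(f_j^{(l-1)}(\mathbf{X},n))_{j\geq 1}$ is exchangeable and converges to an i.i.d.\ $q^{(l-1)}=\operatorname{St}_k(\alpha,\Gamma^{(l-1)})$ sequence, so its empirical measure $\frac1n\sum_{j=1}^n\delta_{f_j^{(l-1)}}$ converges weakly to $q^{(l-1)}$; applying the continuous functional $f\mapsto\|\phi\circ f\|^\alpha\,\zeta_{(\phi\circ f)/\|\phi\circ f\|}$ together with a law of large numbers for exchangeable arrays yields $\widehat\Gamma_n\to\Gamma^{(l)}$ in probability. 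Here the growth hypothesis $|\phi(s)|\le(a+b|s|^\beta)^\gamma$ with $\beta<\gamma^{-1}$ is exactly what is required: it forces the effective exponent $\alpha\beta\gamma<\alpha$, so that $\mathbb{E}_{f\sim q^{(l-1)}}[\|\phi\circ f\|^\alpha]<+\infty$ by the moment bound for Stable laws (Property 1.2.16 of \citet{taqqu}), making the limit $\Gamma^{(l)}$ finite and the averages uniformly integrable; note that, in contrast with the classes $E_2\cup E_3$, no generalized CLT (Theorem \ref{GCLT}) is invoked, precisely because the sub-linear condition keeps the $\alpha$-moment finite and an ordinary LLN suffices. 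Finally I would write the conditional joint characteristic function of $(f_1^{(l)},\dots,f_M^{(l)})$ as $\exp\{-\sum_{i=1}^M\int(\cdots)\,\widehat\Gamma_n(ds)\}$, which is bounded and continuous in $\widehat\Gamma_n$, and pass to the limit by bounded convergence; the concentration of $\widehat\Gamma_n$ at a deterministic limit is what collapses the conditional law to the product $\bigotimes_{i=1}^M\operatorname{St}_k(\alpha,\Gamma^{(l)})$ and thereby restores the asymptotic independence of the rows.

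The hard part will be the passage under ``joint growth'', where the $(l-1)$-th layer is not yet at its limit while the $l$-th layer is being formed. Unlike the sequential-growth argument (Theorem \ref{thm:deep_gamma_leq_1}), one cannot condition on an already-i.i.d.\ previous layer: the empirical spectral measure $\widehat\Gamma_n$ is built from a triangular array that is only \emph{asymptotically} i.i.d., so the law of large numbers must be established for exchangeable (de Finetti) arrays and coupled with the uniform integrability supplied by the $\alpha$-moment bound. Controlling this convergence jointly in $n$ across all layers, and verifying that the continuous-mapping and bounded-convergence steps survive the heavy tails of the Stable pre-activations, is the technical heart of the argument.
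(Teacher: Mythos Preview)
The paper does not prove this theorem. Theorem~\ref{thm:favaro2020stable} is stated in the appendix under ``Complementary statements and proofs'' purely as a quotation of Theorem~2 of \citet{favaro2021}; no argument is given in the present paper, and the result is invoked only as background (see the references to it in the Introduction and in Section~\ref{sec2}). Consequently there is no in-paper proof against which to compare your proposal.

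That said, your sketch is a reasonable outline of the standard strategy for this type of result and is broadly in the spirit of the original \citet{favaro2020stable,favaro2021} arguments: condition on the previous layer to obtain an exact conditionally-$\operatorname{St}_k(\alpha,\widehat\Gamma_n)$ law, then pass to the limit in the random spectral measure. Two points deserve more care before this could be called a proof. First, the step ``empirical measure of $(f_j^{(l-1)}(\mathbf X,n))_{j\le n}$ converges to $q^{(l-1)}$'' under \emph{joint} growth is not a consequence of the inductive hypothesis as stated: weak convergence of the exchangeable sequence $(f_j^{(l-1)}(\mathbf X,n))_{j\ge1}$ to an i.i.d.\ limit does not by itself give convergence of the \emph{empirical} measure along the triangular array $j\le n$, and this is exactly where the original proof does real work (and where, as the Discussion in Section~\ref{sec2} notes, one may appeal to convergence criteria for mixtures of infinitely divisible laws such as Theorem~1 of \cite{fortini97}). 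Second, the map $f\mapsto \|\phi\circ f\|^\alpha\,\zeta_{(\phi\circ f)/\|\phi\circ f\|}$ is measure-valued and unbounded, so ``continuous mapping plus LLN'' needs to be unpacked: you must show weak convergence of $\widehat\Gamma_n$ as a finite measure on $\mathbb S^{k-1}$, which requires both convergence of the total mass (this is where the moment bound $\mathbb E_{q^{(l-1)}}[\|\phi\circ f\|^\alpha]<\infty$ and a uniform-integrability argument enter) and convergence against bounded test functions. Your proposal names these issues correctly but does not yet resolve them.
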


\begin{figure}[htp!]
\centering
\includegraphics[width=0.32\textwidth, trim = 40 40 40 40]{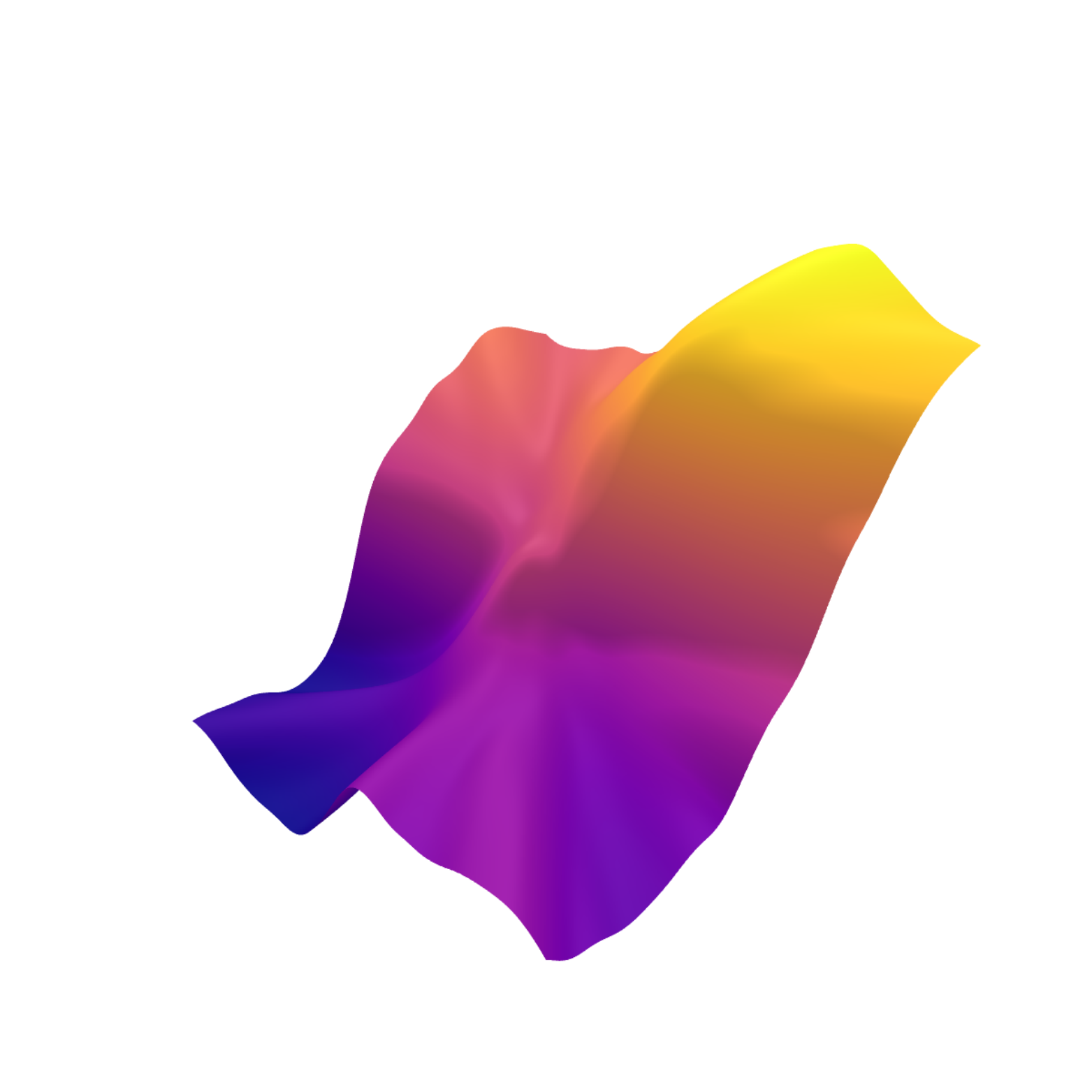}
\includegraphics[width=0.32\textwidth, trim = 40 40 40 40]{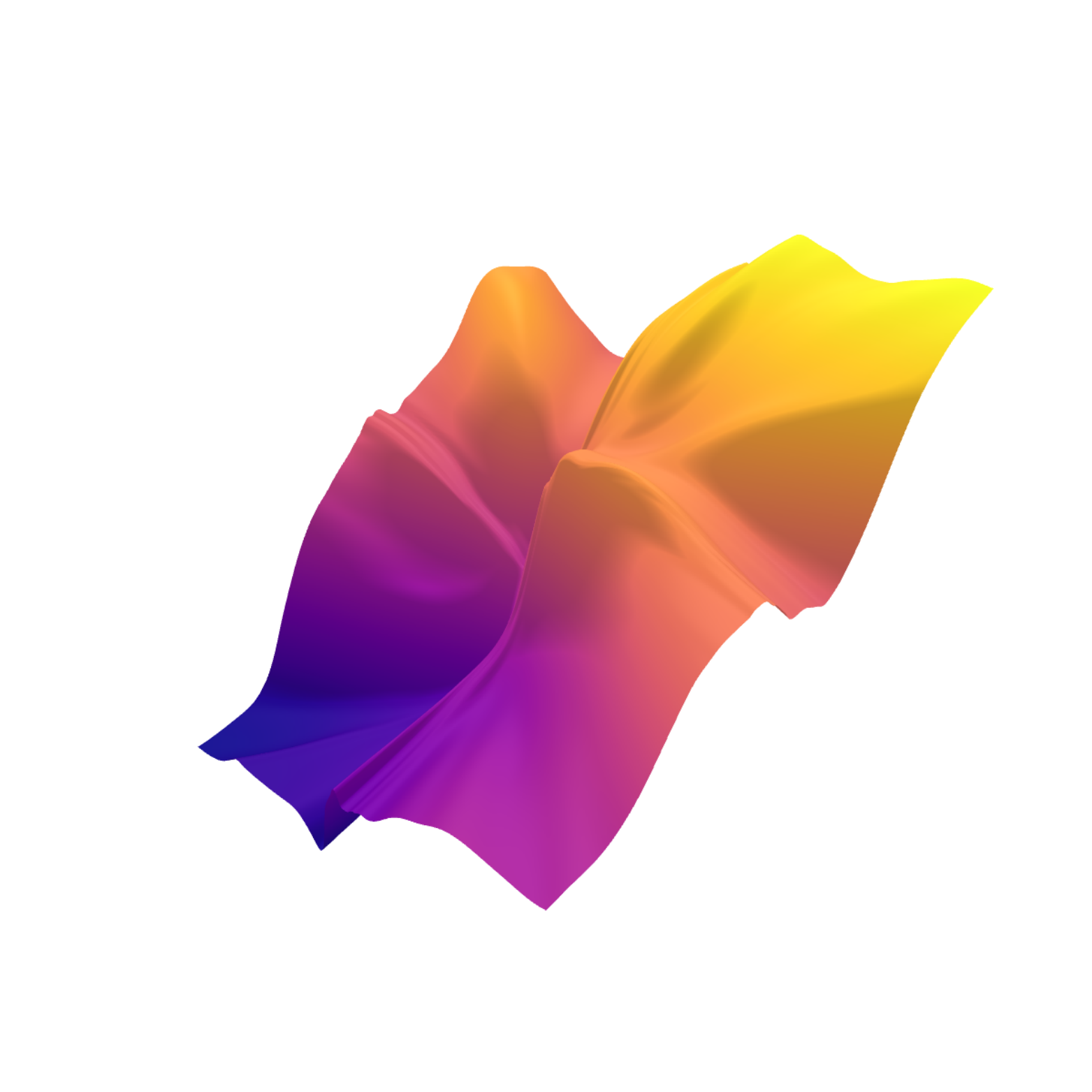}\\
\includegraphics[width=0.32\textwidth, trim = 40 40 40 40]{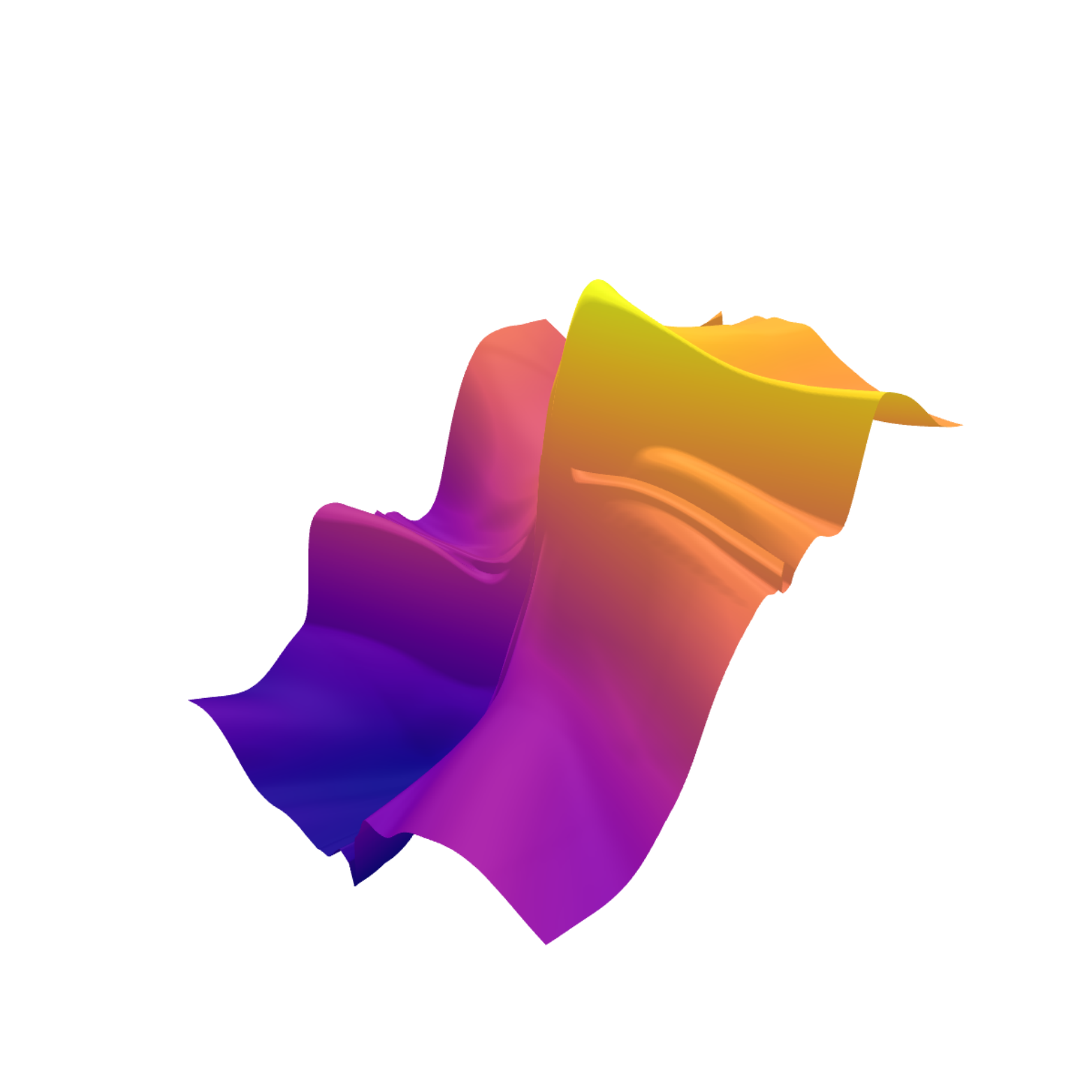}
\includegraphics[width=0.32\textwidth, trim = 40 40 40 40]{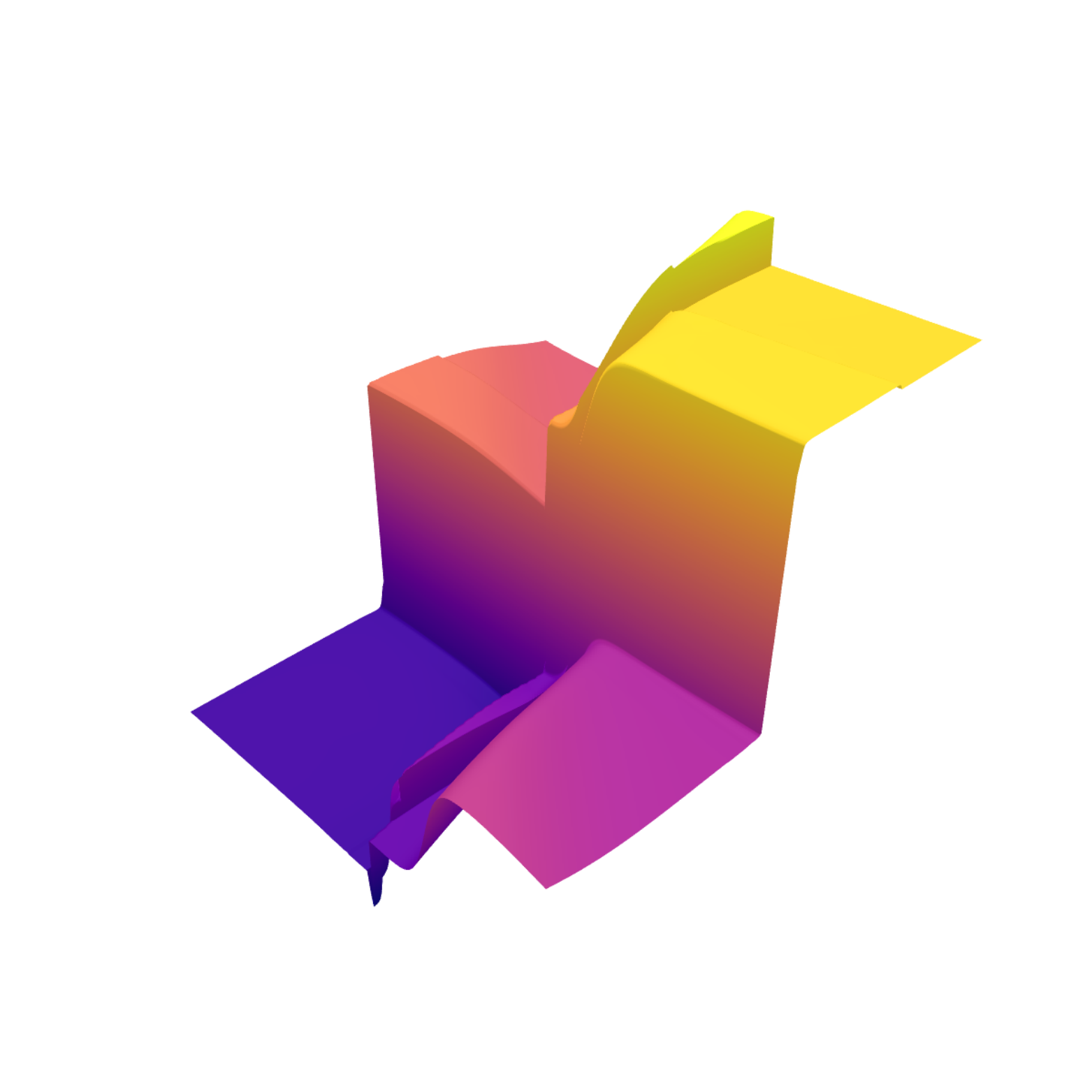}
\caption{[Figure 1 of \citet{favaro2022}].  Samples of a shallow Stable NN mapping $[0,1]^2$ to $\mathbb{R}$, with a $\tanh$ activation function and witdth $n=1024$, for different values of the stability parameter $\alpha$: i) $\alpha=2.0$ (Gaussian distribution) top-left; ii) $\alpha=1.5$ top-right; iii) $\alpha=1.0$ (Cauchy distribution) bottom-left; iv) $\alpha=0.5$ (L\'evy distribution) bottom-right.}\label{fig:shapes1}
\end{figure}

\begin{figure}[htp!]
\centering
\includegraphics[width=0.32\textwidth, trim = 40 40 40 40]{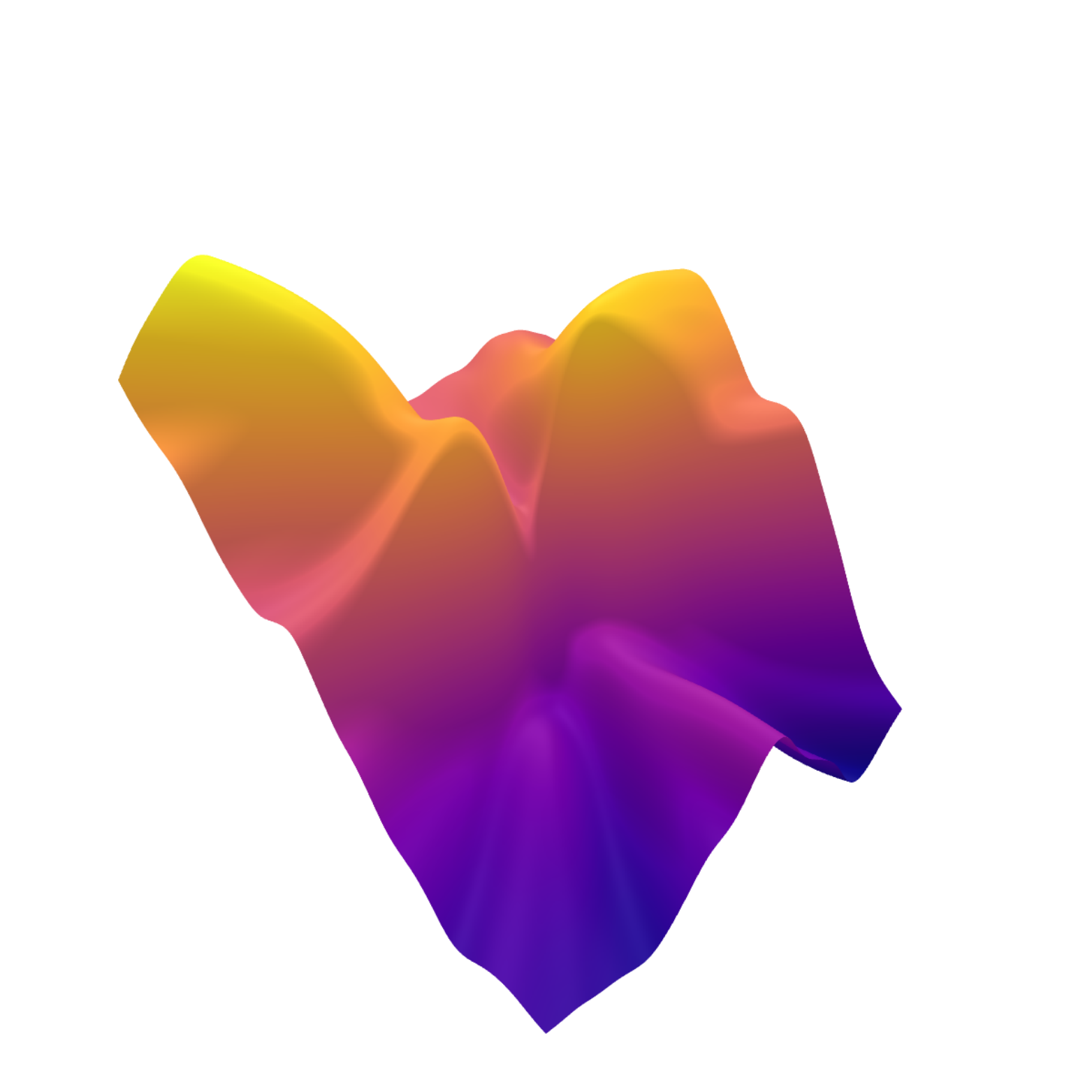}
\includegraphics[width=0.32\textwidth, trim = 40 40 40 40]{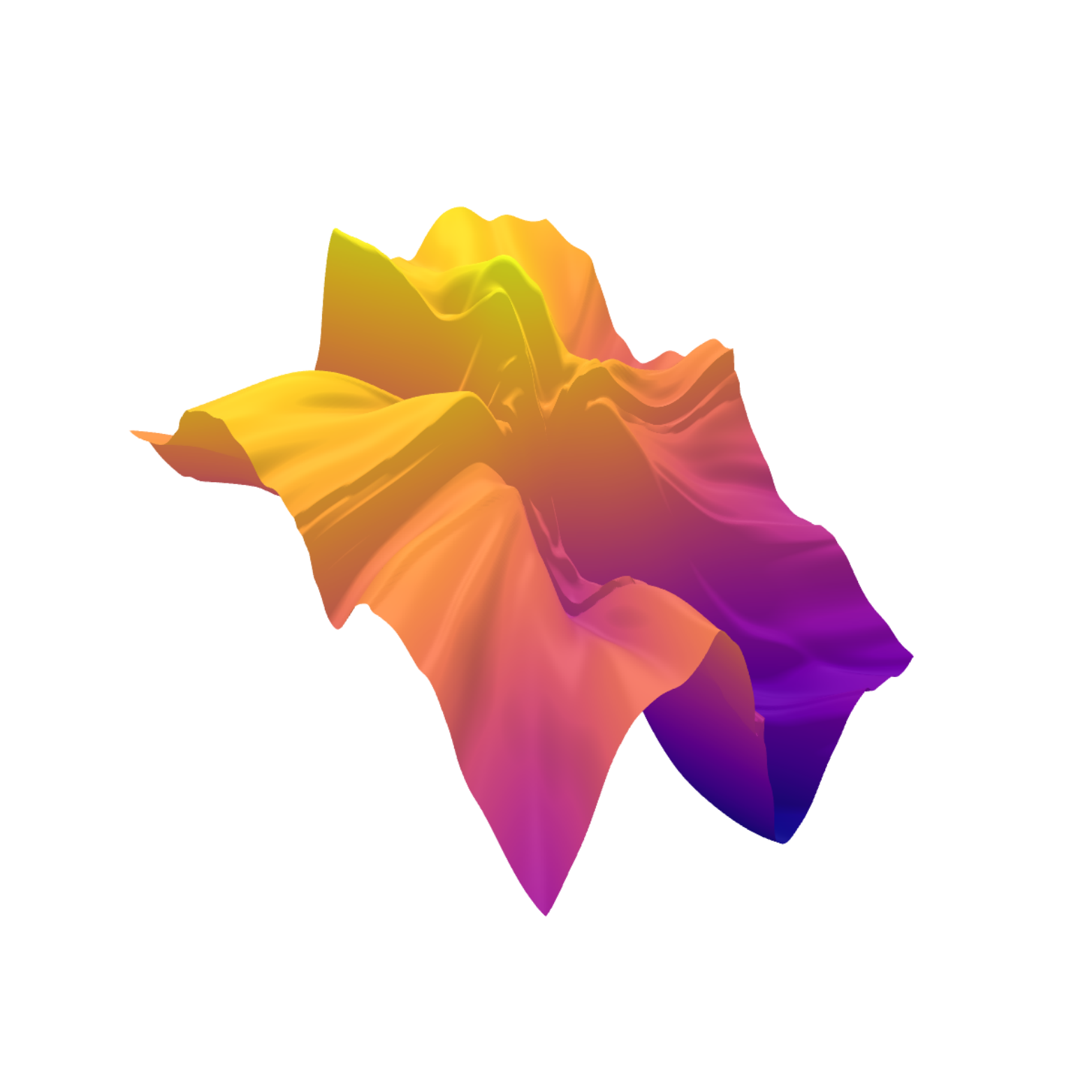}\\
\includegraphics[width=0.32 \textwidth, trim = 40 40 40 40]{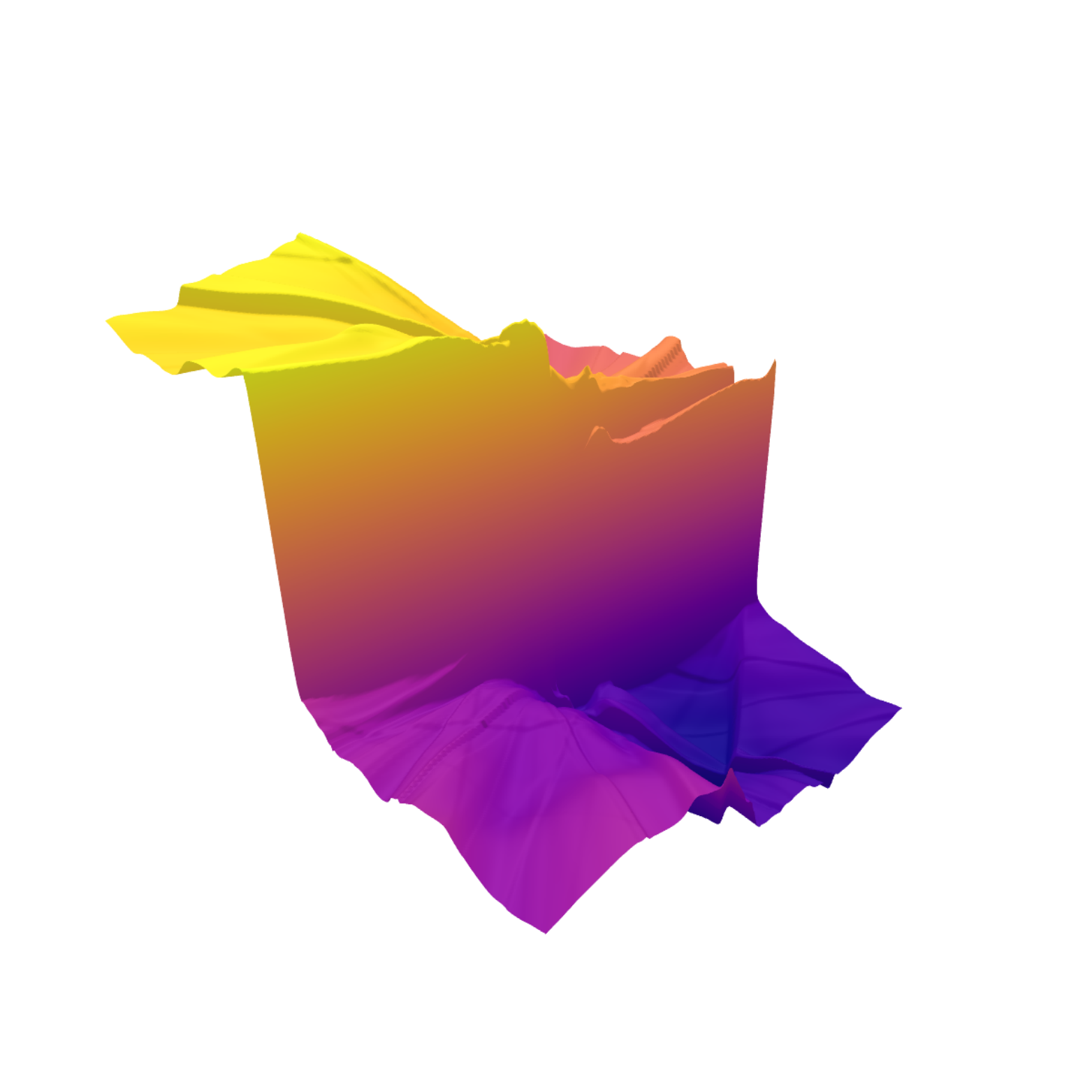}
\includegraphics[width=0.32\textwidth, trim = 40 40 40 40]{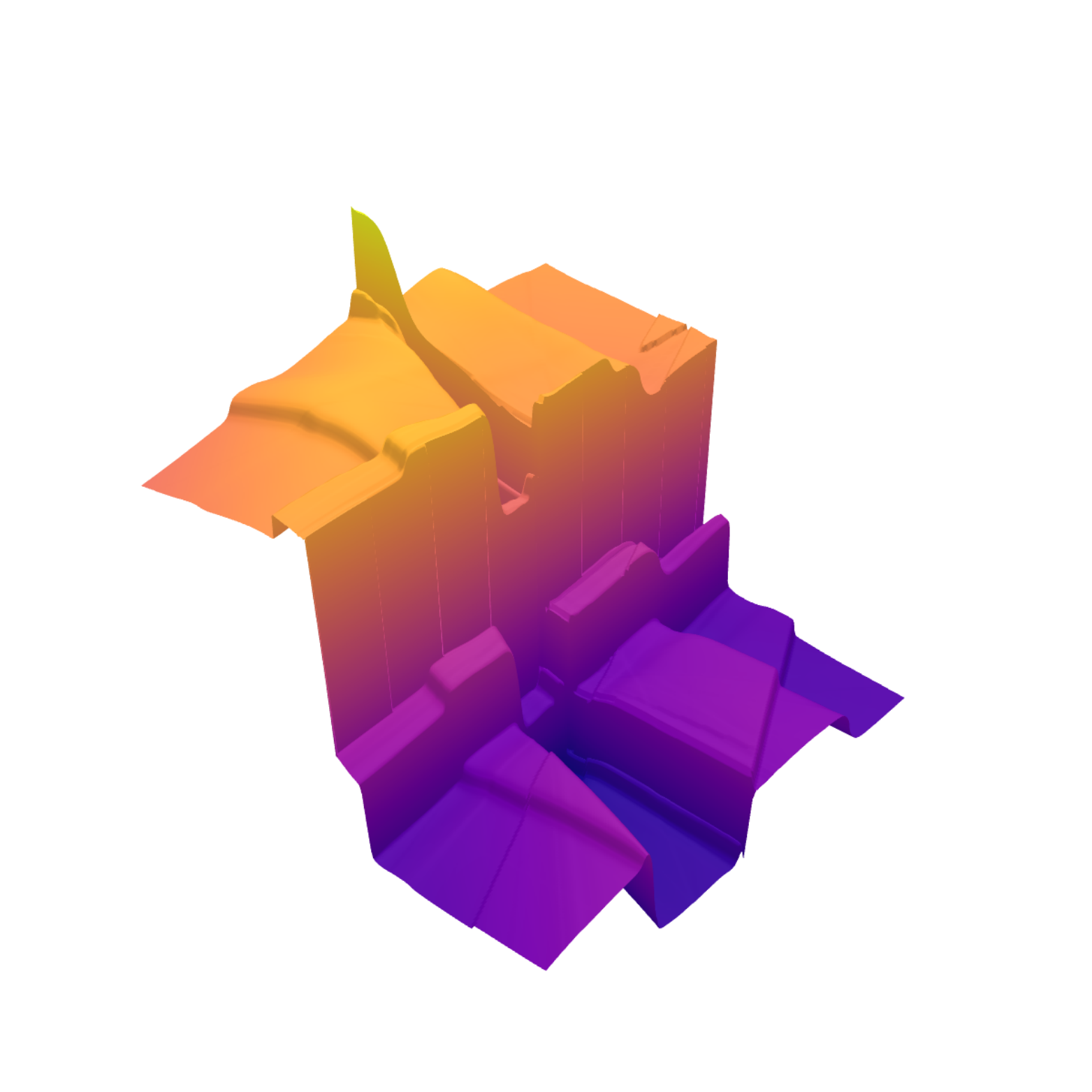}
\caption{[Figure 1 of \citet{favaro2021}]. Samples of a deep Stable NN mapping $[0,1]^2$ to $\mathbb{R}$, with a $\tanh$ activation function and $2$ hidden layers of width $n=1024$, for different values of the stability parameter $\alpha$: i) $\alpha=2.0$ (Gaussian distribution) top-left; ii) $\alpha=1.5$ top-right; iii) $\alpha=1.0$ (Cauchy distribution) bottom-left; iv) $\alpha=0.5$ (L\'evy distribution) bottom-right.}\label{fig:shapes2}
\end{figure}

\end{document}